\renewcommand{\@notice}{}
\newtheorem{proposition}{Proposition}
\newtheorem{definition}{Definition}
\newtheorem{theorem}{Theorem}
\newtheorem{assumption}{Assumption}
\newtheorem{remark}{Remark}
\title{Embedded Safety-Aligned Intelligence via Differentiable Internal Alignment Embeddings}
\author{
  \textbf{Harsh Rathva}, \textbf{Ojas Srivastava}, \textbf{Pruthwik Mishra} \\
  Sardar Vallabhbhai National Institute of Technology (SVNIT), Surat, India \\
  \small{\texttt{\{u24ai036, u24ai028, pruthwikmishra\}@aid.svnit.ac.in}}
}
\begin{document}

\maketitle

\begin{abstract}
We introduce \emph{Embedded Safety-Aligned Intelligence} (ESAI), a theoretical framework for multi-agent reinforcement learning that embeds alignment constraints directly into agents' internal representations via differentiable \emph{internal alignment embeddings} (IAE). Unlike external reward shaping or post-hoc safety constraints, IAE are learned latent variables that predict externalized harm through counterfactual reasoning and modulate policy gradients toward harm reduction via attention gating and graph diffusion.

We formalize the ESAI framework through four integrated mechanisms: (1) differentiable counterfactual alignment penalties computed via softmin reference distributions, (2) IAE-weighted attention biasing perceptual salience toward alignment-relevant features, (3) Hebbian affect-memory coupling supporting temporal credit assignment, and (4) similarity-weighted graph diffusion with bias-mitigation controls. We derive conditions for bounded internal embeddings under Lipschitz constraints and spectral radius bounds, analyze computational complexity as $O(Nkd)$ for $N$ agents with $k$-dimensional embeddings, and discuss theoretical properties including contraction dynamics and fairness-performance tradeoffs.

This work positions ESAI as a conceptual contribution to differentiable alignment mechanisms in multi-agent systems. We identify open theoretical questions regarding convergence guarantees, optimal embedding dimensionality, and extension to high-dimensional state spaces. Empirical validation remains future work.
\end{abstract}

\section{Introduction}
\label{sec:intro}

Contemporary multi-agent reinforcement learning (MARL) optimizes explicit task objectives but typically lacks internal differentiable regulators that encourage prosocial behavior and stable coordination under distribution shift. Standard approaches to alignment—such as reward shaping \citep{ng1999policy}, constrained optimization \citep{achiam2017constrained}, or inverse RL \citep{hadfield2016cooperative}—rely on external supervision signals that are either hand-designed, require human preference data, or operate as non-differentiable constraints decoupled from policy learning dynamics.

We investigate whether agents can instead learn an \emph{internal alignment embedding} (IAE): a differentiable latent variable that tracks predicted externalized harm and shapes policy gradients toward harm reduction through three key properties:
\begin{enumerate}[leftmargin=*]
  \item \textbf{Predictive}: IAE forecasts alignment-relevant outcomes via counterfactual reasoning over candidate actions.
  \item \textbf{Regulatory}: IAE modulates perception and action selection through attention gating and gradient coupling.
  \item \textbf{Distributed}: IAE propagates across agent neighborhoods via graph diffusion with controllable similarity weighting.
\end{enumerate}

This paper presents \emph{Embedded Safety-Aligned Intelligence} (ESAI), a theoretical framework formalizing these principles. ESAI differs from existing alignment approaches in three fundamental ways: (1) alignment constraints are \emph{embedded} in learned internal representations rather than imposed externally, (2) harm prediction is \emph{differentiable} and jointly trained with policy parameters, and (3) multi-agent coordination emerges from \emph{graph diffusion} of alignment signals rather than centralized control or explicit communication protocols.

\paragraph{Theoretical contributions.}
We formalize ESAI through:
\begin{itemize}[leftmargin=*]
  \item A differentiable counterfactual alignment penalty that enables gradient-based harm forecasting without discrete argmin operations (Sec.~\ref{sec:counterfactual}).
  \item IAE-weighted attention mechanisms that bias perceptual salience toward alignment-relevant features (Sec.~\ref{sec:attention}).
  \item Hebbian affect-memory coupling with differentiable read operations supporting temporal credit assignment (Sec.~\ref{sec:hebbian}).
  \item Locally damped graph diffusion with similarity-weighted propagation and bias-mitigation controls (Sec.~\ref{sec:diffusion}).
  \item Conditions for bounded internal embeddings under Lipschitz constraints and spectral radius bounds (Sec.~\ref{sec:stability}).
\end{itemize}

\paragraph{Scope and positioning.}
ESAI is a \emph{conceptual framework}, not a validated algorithm. We present one possible architectural instantiation (Sec.~\ref{sec:arch}) but emphasize that alternative implementations may satisfy ESAI principles. Key open questions include optimal embedding dimensionality, convergence guarantees under stochastic dynamics, and scalability to high-dimensional observation spaces. \textbf{Empirical validation across diverse environments remains necessary future work} to assess whether ESAI's theoretical advantages translate to practical coordination improvements.

IAE is a computational abstraction—\emph{not} a claim about subjective emotion or consciousness. All theoretical analysis assumes availability of domain-specific harm metrics; we discuss implications of this assumption and potential biases in Sec.~\ref{sec:limitations}.

\section{Related Work}
\label{sec:related}

\subsection{Alignment and Safety in MARL}

Cooperative inverse reinforcement learning \citep{hadfield2016cooperative} infers alignment objectives from human demonstrations but requires extensive preference data and does not maintain differentiable internal alignment states. Deep reinforcement learning from human preferences \citep{christiano2017deep} learns reward models from comparisons but operates on scalar rewards rather than internal representations that can gate perception or modulate credit assignment.

Constrained policy optimization \citep{achiam2017constrained} enforces safety via Lagrangian constraints but lacks differentiable internal alignment states that persist across timesteps. \citet{alshiekh2018safe} introduce safe RL via shielding, which uses formal methods to prevent unsafe actions but operates as an external constraint layer rather than an embedded internal mechanism.

\citet{hughes2018inequity} demonstrate that intrinsic inequity aversion—an internal social preference—improves cooperation in multi-agent settings. While this shows that intrinsic motivations can enhance coordination, the preference structure is hand-designed rather than learned from predicted outcomes.

Recent work explores complementary safety mechanisms. \citet{chatterji2025smarl} introduce SMARL with probabilistic logic shields that reduce safety violations through external symbolic constraints operating separately from policy learning. \citet{mayoral2025designing} investigate designing ethical environments using MARL, demonstrating that environmental structure can encourage prosocial behavior by shaping the task itself rather than agent internals.

Multi-objective RL \citep{hayes2022practical} optimizes Pareto frontiers over competing objectives, providing a comprehensive framework for balancing multiple goals but not integrating alignment-specific internal representations or mechanisms for distributed coordination.

\textbf{ESAI contributes} a differentiable framework where alignment emerges from learned internal embeddings that predict harm via counterfactual reasoning and modulate policy gradients through attention gating, rather than relying on external rewards, hand-designed preferences, discrete structures, or non-differentiable constraints.

\subsection{Counterfactual Reasoning in MARL}

Counterfactual Multi-Agent Policy Gradients (COMA) \citep{foerster2018counterfactual} computes counterfactual baselines by marginalizing single-agent actions for centralized credit assignment. These counterfactuals reduce variance in gradient estimates but do not predict or prevent externalized harm.

\citet{zhou2022pac} introduce Prediction-Assisted Counterfactual (PAC) methods that use counterfactual predictions as auxiliary losses to assist value factorization in MARL. PAC demonstrates that learned counterfactual models can improve representational quality, but targets credit assignment rather than alignment.

\citet{yao2024cmix} develop CMIX, applying causal value decomposition for cooperative MARL. This work provides structured causal inference for multi-agent credit assignment but does not address safety or harm prediction.

\citet{alamiyan2023kindmarl} introduce a framework for measuring kindness in multi-agent reinforcement learning. \textbf{This work is conceptually closest to ESAI} in exploring intrinsic prosocial mechanisms. However, it focuses on defining and measuring kindness metrics rather than learning differentiable internal alignment states that forecast harm through counterfactual reasoning and gradient flow.

\textbf{ESAI extends} counterfactual reasoning from credit assignment to differentiable alignment penalties, training IAE to forecast harm and penalizing deviations from harm-minimizing reference policies via softmin distributions that preserve gradient flow for joint policy-predictor learning.

\subsection{Zero-Shot Scaling and Population Generalization}

A long-standing challenge in MARL is transferring policies trained on small agent populations to larger groups without retraining. \citet{carion2019structured} use structured prediction approaches for generalization in cooperative MARL, demonstrating that learned models can be reused on problems with different numbers of agents via optimization-based inference.

\citet{guo2025shppo} propose Scalable and Heterogeneous PPO (SHPPO) for multi-agent learning, integrating parameter sharing with heterogeneous components for improved scalability. SHPPO demonstrates robustness across population sizes through a balance of shared and agent-specific parameters.

The literature identifies key architectural features for robust scaling: permutation invariance, local message passing via graph neural networks, composable action representations, and parameter sharing. However, existing work does not explicitly model internal alignment states that propagate via graph diffusion.

\textbf{ESAI incorporates} these scaling principles (permutation invariance via symmetric IAE updates, local message passing via graph diffusion) while adding alignment-specific mechanisms. The graph diffusion operator enables decentralized IAE propagation that could support population scaling, though empirical validation is needed to assess whether alignment pressure degrades less than task performance under scaling.

\subsection{Graph Neural Networks and Attention Mechanisms in MARL}

Graph neural networks enable structured communication in MARL by modeling agents as nodes with learned edge weights. \citet{jiang2020graph} provide foundational work on graph convolutional RL, demonstrating that GNN architectures can capture relational structure in multi-agent problems.

Attention mechanisms in MARL compute dynamic edge weights and gate messages so agents learn which peers are relevant in each state. \citet{iqbal2019actor} introduce Actor-Attention-Critic for multi-agent learning, showing that attention over other agents improves coordination. \citet{liu2020g2anet} develop G2ANet using graph attention mechanisms for multi-agent game abstraction, demonstrating improved performance through learned relational representations. \citet{wang2021qplex} propose QPLEX with duplex dueling and attention mechanisms for value factorization, demonstrating improved stability. \citet{liu2023na2q} develop NA2Q using neural attention additive models for interpretable multi-agent Q-learning, showing that attention can provide both performance and interpretability.

However, existing work does not integrate graph mechanisms with internal alignment states that track predicted harm.

\textbf{ESAI integrates} graph diffusion with IAE dynamics, enabling similarity-weighted propagation of alignment-relevant information. The diffusion operator propagates IAE across neighborhoods, while similarity weighting modulates edge strengths based on learned agent identities. The bias-mitigation regularizer provides interpretable fairness-performance tradeoffs not present in standard graph communication architectures.

\subsection{Memory and Plasticity in MARL}

Differentiable memory systems such as Neural Turing Machines \citep{graves2014neural} enable learned read/write operations for temporal credit assignment, showing that memory mechanisms can be trained end-to-end via gradient descent.

\citet{miconi2018differentiable} demonstrate that backpropagation can train differentiable Hebbian plasticity rules that adapt network weights during deployment, showing benefits for continual learning and adaptation. This work proves that associative learning rules can be optimized jointly with network parameters.

However, no existing work couples Hebbian learning with alignment-specific internal states or uses Hebbian traces to support counterfactual forecasting of harm. Standard memory architectures (attention over episodic buffers, recurrent states) do not encode associative traces between internal alignment embeddings and perceptual features.

\textbf{ESAI couples} Hebbian traces to IAE dynamics via differentiable read operations, enabling alignment-aware memory updates that support counterfactual forecasting. The Hebbian matrix encodes co-activation history of IAE and percepts, providing context for harm prediction while preserving end-to-end trainability.

\subsection{Vision-Language Models for Policy Guidance}

Recent work explores using vision-language models to provide semantic guidance for reinforcement learning. \citet{wu2025varl} propose using VLMs as action advisors for online RL, demonstrating that pre-trained models can provide interpretable action suggestions. However, this approach requires external model queries at each timestep rather than learning intrinsic alignment representations embedded in the agent's parameters.

\textbf{ESAI differs} by learning alignment representations end-to-end without requiring external model queries, pre-trained components, or oracle access to semantic models during deployment.

\subsection{Relation to Potential-Based Reward Shaping}

Potential-based reward shaping \citep{ng1999policy} adds $\Phi(s_{t+1}) - \Phi(s_t)$ to rewards, preserving optimal policies under certain conditions and providing theoretical guarantees.

ESAI differs in three fundamental ways:
\begin{enumerate}[itemsep=2pt]
  \item \textbf{Learned dynamics}: IAE evolves via learned function $g_\phi$ and graph diffusion operator $L$, not hand-designed potential functions. This enables adaptation to environment-specific harm structures without manual redesign.
  \item \textbf{Counterfactual supervision}: IAE is trained to predict harm outcomes via counterfactual forecasting, not approximate task value. This separates alignment objectives from task rewards.
  \item \textbf{Perceptual gating}: IAE modulates perception via attention, enabling non-Markovian salience modulation unavailable to potential-based methods that only augment rewards.
\end{enumerate}

ESAI trades the theoretical guarantees of potential-based shaping (policy invariance) for adaptive capacity and richer internal dynamics, though this tradeoff requires empirical investigation.

\subsection{Summary of Gaps Addressed}

While individual components exist in isolation—counterfactual reasoning for credit assignment \citep{foerster2018counterfactual,zhou2022pac}, graph diffusion for communication \citep{jiang2020graph}, attention for coordination \citep{iqbal2019actor,liu2020g2anet,wang2021qplex,liu2023na2q}, external shields \citep{alshiekh2018safe,chatterji2025smarl}, intrinsic social preferences \citep{hughes2018inequity,alamiyan2023kindmarl}, ethical environment design \citep{mayoral2025designing}, and differentiable memory \citep{graves2014neural,miconi2018differentiable}—\textbf{no existing work unifies differentiable internal alignment embeddings with counterfactual harm prediction, graph-based propagation, Hebbian memory coupling, and bias-mitigation controls in a single end-to-end trainable architecture}.

ESAI provides this unification as a theoretical framework, formalizing internal alignment embeddings as persistent, predictive, and gradient-coupled states. While related work on kindness measurement \citep{alamiyan2023kindmarl} shares similar motivations, ESAI contributes a complete differentiable architecture for learning alignment through counterfactual forecasting. Empirical validation is needed to assess whether ESAI's integration provides practical advantages over simpler alternatives such as external shielding, heuristic social preferences, or environmental design approaches.

\section{Embedded Safety-Aligned Intelligence: Framework Definition}
\label{sec:framework}

\begin{definition}[Internal Alignment Embedding]
\label{def:iae}
An \emph{internal alignment embedding} (IAE) is a differentiable latent variable $E_t \in \mathbb{R}^k$ maintained by each agent that satisfies:
\begin{enumerate}[itemsep=2pt]
  \item \textbf{Predictive correspondence}: $E_t$ correlates with predicted externalized harm under learned dynamics.
  \item \textbf{Gradient coupling}: $E_t$ influences policy gradients via differentiable transformations of rewards or observations.
  \item \textbf{Temporal persistence}: $E_t$ evolves through learned update rules that preserve information across multiple timesteps.
\end{enumerate}
\end{definition}

\begin{definition}[Embedded Safety-Aligned Intelligence]
\label{def:esai}
A multi-agent learning system exhibits \emph{Embedded Safety-Aligned Intelligence} if:
\begin{enumerate}[itemsep=2pt]
  \item Each agent maintains an IAE satisfying Definition~\ref{def:iae}.
  \item Policy learning incorporates a differentiable alignment objective that penalizes predicted harm encoded in IAE.
  \item IAE dynamics are supervised (implicitly or explicitly) to forecast alignment-relevant outcomes.
  \item The system supports distributed coordination through IAE propagation across agent neighborhoods.
\end{enumerate}
\end{definition}

ESAI systems differ from external alignment mechanisms (reward shaping, constraints, shields) in that alignment pressure arises from \emph{internal learned representations} rather than external supervisory signals. This embedding has three potential advantages:

\begin{itemize}[leftmargin=*]
  \item \textbf{Gradient-based adaptation}: Differentiability enables online learning of alignment objectives without discrete switching or constraint satisfaction solvers.
  \item \textbf{Perceptual salience}: IAE can gate attention to alignment-relevant features, potentially improving sample efficiency in sparse-harm environments.
  \item \textbf{Distributed coordination}: Graph diffusion of IAE enables decentralized alignment pressure without centralized oversight.
\end{itemize}

However, ESAI also introduces challenges: (1) IAE semantics depend on quality of harm supervision, (2) learned embeddings may lack interpretability, and (3) computational overhead increases with embedding dimension and graph connectivity. We formalize these tradeoffs in subsequent sections.

\section{Architecture Overview}
\label{sec:arch}

We present one possible instantiation of the ESAI framework. Alternative architectures satisfying Definitions~\ref{def:iae}--\ref{def:esai} may exist; this section illustrates design principles rather than prescribing a unique implementation.

Each agent augments a standard policy-gradient learner with an internal alignment embedding $E_t \in \mathbb{R}^k$. All computations involving $E_t$ are differentiable, enabling gradient propagation into policy parameters $\theta$.

\subsection{Design Principles}

Table~\ref{tab:modules} maps architectural components to known MARL failure modes, motivating the integration of four mechanisms:

\begin{table}[h]
\centering
\caption{Architectural modules and targeted coordination failures.}
\label{tab:modules}
\begin{tabular}{lll}
\toprule
Module & Targeted Failure Mode & Mechanism \\
\midrule
Counterfactual Regret & Selfish Nash equilibria & Harm prediction penalty \\
IAE-Weighted Attention & Ignoring victim states & Salience modulation \\
Hebbian Memory & Temporal credit assignment & Associative traces \\
Graph Diffusion & Coordination collapse & Multi-agent propagation \\
\bottomrule
\end{tabular}
\end{table}

\begin{figure}[t]
\centering
\includegraphics[width=\linewidth]{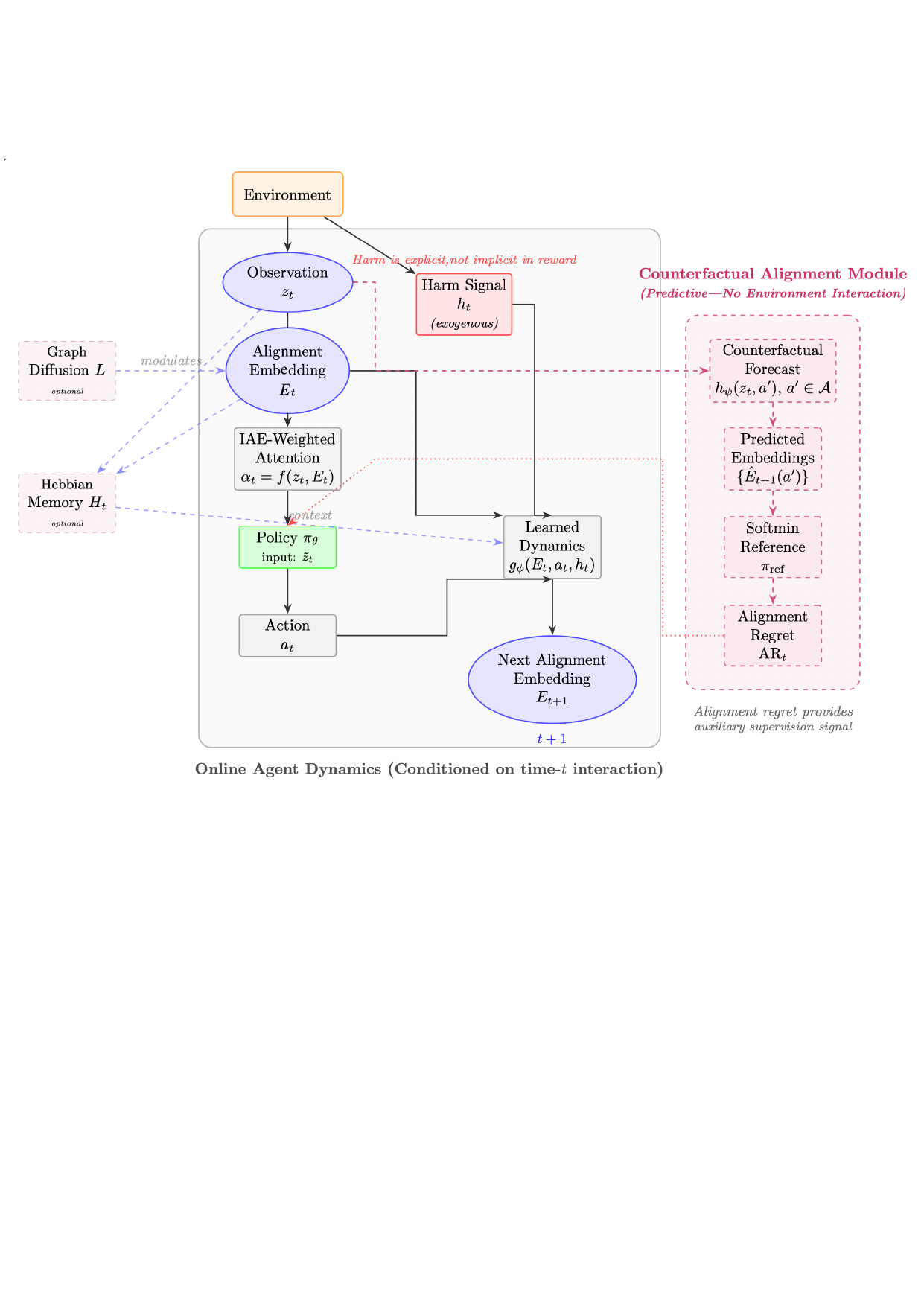}  
\caption{ESAI architecture schematic. The internal alignment embedding $E_t$ is updated via learned dynamics $g_\phi$, graph diffusion $L$, and Hebbian memory $H_t$. Counterfactual forecasts (computed via EMA target network $\psi_{\text{target}}$) generate alignment penalties $\mathrm{AR}_t$ that shape policy gradients. IAE-weighted attention $\alpha_t$ modulates perceptual input $z_t$. Dotted lines indicate gradient flow; solid lines denote forward computation.}
\label{fig:arch}
\end{figure}

\section{Mathematical Formulation}
\label{sec:math}

\subsection{Task and Alignment Objectives}

Let $s_t \in \mathcal{S}$ denote environment state, $a_t \in \mathcal{A}$ agent action, and $r_t^{\text{ext}}$ extrinsic reward. The standard RL objective is:
\begin{equation}
J_{\text{task}}(\theta) = \mathbb{E}_{\pi_\theta}\left[\sum_{t=0}^\infty \gamma^t r_t^{\text{ext}}\right].
\end{equation}

\begin{assumption}[Harm Observability]
\label{ass:harm}
There exists a measurable harm signal $h_t: \mathcal{S} \times \mathcal{A}^N \times \mathcal{S} \to \mathbb{R}_{\ge 0}$ satisfying:
\begin{enumerate}[itemsep=2pt]
    \item \textbf{Exogeneity}: $h_t$ is defined independently of policy parameters $\theta$
    \item \textbf{Observability}: $h_t$ is computable from transition $(s_t, \mathbf{a}_t, s_{t+1})$
    \item \textbf{Boundedness}: $\sup_{t} h_t \le H_{\max} < \infty$
    \item \textbf{Relevance}: $\mathbb{E}[h_t \mid E_{i,t}]$ is monotonically related to $\|E_{i,t}\|_2$
\end{enumerate}
This assumption encodes domain-specific harm definitions as exogenous inputs. ESAI does not learn \emph{what} constitutes harm—only \emph{how to predict and avoid} externally specified harm. The normative content of $h_t$ must be provided by system designers and is subject to the biases discussed in Sec.~\ref{sec:limitations}.
\end{assumption}

Each agent $i \in \{1, \ldots, N\}$ maintains IAE $E_{i,t} \in \mathbb{R}^k$. We define an alignment potential:
\begin{equation}
\Phi(s_t, E_{i,t}) = v^\top \sigma(W[s_t; E_{i,t}]),
\end{equation}
where $[s_t; E_{i,t}]$ denotes concatenation and $\sigma$ is a nonlinear activation. The alignment objective encourages low-norm embeddings:
\begin{equation}
J_{\text{align}}(\theta) = -\mathbb{E}_{\pi_\theta}\left[\sum_t \gamma^t \sum_{i=1}^N \|E_{i,t}\|_2\right].
\end{equation}

This objective encodes the principle that lower IAE magnitude corresponds to lower predicted harm—a design choice that must be validated empirically.

\subsection{IAE Dynamics}

The embedding for agent $i$ evolves via:
\begin{equation}\label{eq:iae_dynamics}
E_{i,t+1} = \gamma_E E_{i,t} + g_\phi(z_{i,t}, a_{i,t}, r_{i,t}^{\text{ext}}) - \alpha \sum_{j \in \mathcal{N}(i)} L_{ij} E_{j,t},
\end{equation}
where:
\begin{itemize}[leftmargin=*,itemsep=2pt]
  \item $\gamma_E \in [0,1)$ controls temporal persistence
  \item $g_\phi: \mathcal{Z} \times \mathcal{A} \times \mathbb{R} \to \mathbb{R}^k$ is a learned update function (e.g., MLP)
  \item $z_{i,t} \in \mathcal{Z}$ is agent $i$'s perceptual input
  \item $\mathcal{N}(i)$ is agent $i$'s neighborhood in the communication graph
  \item $L_{ij}$ are entries of the normalized graph Laplacian
  \item $\alpha \ge 0$ controls diffusion strength
\end{itemize}

The diffusion term $-\alpha \sum_{j \in \mathcal{N}(i)} L_{ij} E_{j,t}$ propagates alignment-relevant information across agent neighborhoods. This design enables decentralized coordination: agents in similar states with high graph connectivity will have correlated IAE dynamics.

\subsection{Differentiable Counterfactual Alignment Penalty}
\label{sec:counterfactual}

For each candidate action $a \in \mathcal{A}$, agent $i$ forecasts the next-step IAE:
\begin{equation}\label{eq:forecast}
\widehat{E}_{i,t+1}^{(a)} = h_\psi(z_{i,t}, a, r_{i,t}^{\text{ext}}, \text{read}(H_{i,t})),
\end{equation}
where $h_\psi$ is a learned forecast network and $\text{read}(H_{i,t})$ provides Hebbian memory context (defined in Sec.~\ref{sec:hebbian}).

\paragraph{Forecast supervision.}
The forecast network $h_\psi$ is trained to predict actual next-step IAE via the loss:
\begin{equation}\label{eq:forecast_loss}
L_{\text{forecast}} = \mathbb{E}\left[\|h_\psi(z_{i,t}, a_{i,t}, r_{i,t}^{\text{ext}}, \text{read}(H_{i,t})) - E_{i,t+1}\|_2^2\right],
\end{equation}
where $E_{i,t+1}$ is computed from Eq.~\eqref{eq:iae_dynamics} using the executed action $a_{i,t}$.

We scalarize forecasted harm via:
\begin{equation}
R(a) = \|\widehat{E}_{i,t+1}^{(a)}\|_2.
\end{equation}

To avoid non-differentiable $\arg\min$, we define a softmin reference distribution with temperature $\tau$:
\begin{equation}\label{eq:softmin}
\pi_{\text{ref}}(a \mid s_t) = \frac{\exp(-R(a)/\tau)}{\sum_{a'} \exp(-R(a')/\tau)}.
\end{equation}

The expected reference embedding is:
\begin{equation}\label{eq:ref_emb}
\widehat{E}_{i,t+1}^{\text{ref}} = \sum_a \pi_{\text{ref}}(a \mid s_t) \, \widehat{E}_{i,t+1}^{(a)}.
\end{equation}

The differentiable alignment regret penalizes deviations from the harm-minimizing reference:
\begin{equation}\label{eq:align_regret}
\mathrm{AR}_{i,t} = \big\|E_{i,t+1} - \widehat{E}_{i,t+1}^{\text{ref}}\big\|_2^2 + \kappa \cdot \frac{1}{|\mathcal{N}(i)|} \sum_{j \in \mathcal{N}(i)} \|E_{j,t}\|_2,
\end{equation}
where the second term uses lagged neighbor embeddings $E_{j,t}$ (not future $E_{j,t+1}$) to maintain causal consistency.

\paragraph{Temperature annealing.}
We propose annealing $\tau$ from high to low values over training for three theoretical reasons:
\begin{enumerate}[leftmargin=*,itemsep=0pt]
  \item \textbf{Early exploration}: High $\tau$ smooths $\pi_{\text{ref}}$, preventing premature convergence when forecasts are inaccurate.
  \item \textbf{Gradient stability}: Soft targets reduce gradient variance compared to hard $\arg\min$.
  \item \textbf{Late discretization}: As $\tau \to 0$, $\pi_{\text{ref}}$ recovers near-deterministic behavior, sharpening alignment signals.
\end{enumerate}

\paragraph{Predictor stability via EMA target network.}
To prevent predictor-policy collusion, we propose maintaining an exponential moving average (EMA) target network:
\begin{equation}\label{eq:ema}
\psi_{\text{target}} \leftarrow \tau_{\text{ema}} \psi_{\text{target}} + (1 - \tau_{\text{ema}}) \psi,
\end{equation}
with $\tau_{\text{ema}} \approx 0.995$. All counterfactual forecasts $\widehat{E}_{i,t+1}^{(a)}$ are computed using $h_{\psi_{\text{target}}}$ to stabilize gradient flow, analogous to stability mechanisms in self-supervised learning and double Q-learning.

\subsection{Reward Transformation and Policy Objective}

We transform extrinsic rewards with the alignment penalty:
\begin{equation}\label{eq:reward_transform}
r'_{i,t} = r_{i,t}^{\text{ext}} - \lambda_{\text{reg}} \, \mathrm{AR}_{i,t}.
\end{equation}

Advantages $A_{i,t}$ are computed using generalized advantage estimation (GAE) on $r'_{i,t}$. The policy loss follows PPO-Clip:
\begin{equation}\label{eq:ppo}
L_\pi = \mathbb{E}_t\!\left[\min\left(\rho_t A_{i,t}, \, \text{clip}(\rho_t, 1-\epsilon, 1+\epsilon) A_{i,t}\right)\right],
\end{equation}
where $\rho_t = \pi_\theta(a_{i,t} \mid s_t) / \pi_{\theta_{\text{old}}}(a_{i,t} \mid s_t)$.

\paragraph{Implementation note.}
The PPO-Clip formulation above is one possible instantiation; ESAI principles are compatible with alternative policy gradient methods (A2C, TRPO, SAC, MPO). Appendix~\ref{app:algorithm} provides a complete training loop using PPO as an illustrative—not prescriptive—example.

\subsection{IAE-Weighted Attention}
\label{sec:attention}

Attention weights are computed from the current IAE to bias perceptual salience. To ensure dimensional consistency, we use projection matrix $W_a \in \mathbb{R}^{d \times k}$:
\begin{equation}\label{eq:attention}
\alpha_{i,t} = \text{softmax}(W_a E_{i,t} + b_a) \in \mathbb{R}^d, \quad \tilde{z}_{i,t} = \alpha_{i,t} \odot z_{i,t} \in \mathbb{R}^d,
\end{equation}
where $\odot$ denotes element-wise product and $z_{i,t} \in \mathbb{R}^d$ is the observation vector.

\textbf{Theoretical motivation}: In sparse-harm environments, agents must attend to low-probability events (e.g., victim states). IAE-weighted attention provides a differentiable mechanism for learned salience modulation that could improve sample efficiency compared to uniform attention—though this remains an empirical question.

\subsection{Hebbian Affect-Memory Coupling}
\label{sec:hebbian}

A Hebbian memory matrix $H_{i,t} \in \mathbb{R}^{k \times d}$ updates via outer-product learning:
\begin{equation}\label{eq:hebbian}
H_{i,t+1} = (1 - \delta_H) H_{i,t} + \eta_H (E_{i,t} \otimes z_{i,t}),
\end{equation}
where $\delta_H > 0$ ensures decay and $\eta_H$ is the learning rate.

The Hebbian trace supports counterfactual forecasting via differentiable read:
\begin{equation}\label{eq:hebbian_read}
\text{read}(H_{i,t}) = W_r \, \text{vec}(H_{i,t}),
\end{equation}
where $\text{vec}(\cdot)$ flattens the matrix. This read vector enters forecast model $h_\psi$ in Eq.~\eqref{eq:forecast}.

\textbf{Theoretical motivation}: Hebbian traces encode historical co-activations of IAE and perceptual features, providing context for counterfactual prediction. This could improve temporal credit assignment by associating past states with delayed harm outcomes.

\subsection{Graph Diffusion and Similarity Weighting}
\label{sec:diffusion}

The graph Laplacian $L$ in Eq.~\eqref{eq:iae_dynamics} is row-normalized with spectral radius $\rho(L) \le 2$. Diffusion weights are modulated by cosine similarity of learned identity embeddings $\phi_i \in \mathbb{R}^{d_{\text{id}}}$:
\begin{equation}\label{eq:similarity}
\beta_{ij} = \max(0, \cos(\phi_i, \phi_j)).
\end{equation}

To mitigate emergent in-group favoritism, we introduce a similarity-suppression regularizer:
\begin{equation}\label{eq:bias_reg}
L_{\text{bias}} = \lambda_{\text{bias}} \| \tilde{A} \odot S \|_F^2,
\end{equation}
where $S$ is the similarity matrix with entries $S_{ij} = \beta_{ij}$ and $\tilde{A}$ is the weighted adjacency.

\textbf{Theoretical motivation}: Similarity-weighted diffusion enables faster propagation among similar agents, potentially accelerating coordination. However, this also creates risk of in-group bias. The regularizer $L_{\text{bias}}$ provides a tunable fairness-performance tradeoff.

\subsection{Complete Optimization Objective}

The full objective combines policy loss, entropy regularization, and alignment penalties:
\begin{equation}\label{eq:full_objective}
L(\theta) = \mathbb{E}_t\!\left[
  L_\pi - \beta H(\pi_\theta) + \lambda_H \|H_{i,t}\|_2^2 + \lambda_D \|L\|_F^2 + \lambda_{\text{bias}} L_{\text{bias}}
\right],
\end{equation}
where $\beta$ is entropy coefficient, $\lambda_H$ regularizes Hebbian trace norms, $\lambda_D$ controls Laplacian smoothness, and $\lambda_{\text{bias}}$ governs fairness.

\section{Theoretical Properties and Stability}
\label{sec:stability}

\begin{proposition}[Bounded IAE Under Contraction Condition]
\label{prop:contraction}
Consider the IAE update in Eq.~\eqref{eq:iae_dynamics}. Assume:
\begin{enumerate}[itemsep=2pt]
  \item $g_\phi$ is $L_g$-Lipschitz continuous: $\|g_\phi(z, a, r) - g_\phi(z', a', r')\|_2 \le L_g(\|z-z'\|_2 + \|a-a'\|_2 + |r-r'|)$
  \item Inputs are bounded: $\|z_{i,t}\|_2 \le C_z$, $\|a_{i,t}\|_2 \le C_a$, $|r_{i,t}| \le C_r$ for all $i, t$
  \item There exists $B_0$ such that $\|g_\phi(0, 0, 0)\|_2 \le B_0$
  \item The spectral condition holds: $\|\gamma_E I - \alpha L\|_2 + L_g < 1$
\end{enumerate}
Then $\sup_t \|E_{i,t}\|_2 < \infty$ for all agents $i$ and trajectories.
\end{proposition}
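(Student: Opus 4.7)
The plan is to vectorize the joint IAE state across agents, recast Eq.~\eqref{eq:iae_dynamics} as an affine-in-$\mathbf{E}$ recursion driven by a uniformly bounded nonlinear term, and then close the argument by a one-step contraction estimate unrolled geometrically. Concretely, I would stack $\mathbf{E}_t = [E_{1,t}^\top, \dots, E_{N,t}^\top]^\top \in \mathbb{R}^{Nk}$, define $M := (\gamma_E I_N - \alpha L) \otimes I_k$, and let $\mathbf{G}_t \in \mathbb{R}^{Nk}$ denote the stacking of the per-agent updates $g_\phi(z_{i,t}, a_{i,t}, r_{i,t}^{\text{ext}})$. The dynamics then read $\mathbf{E}_{t+1} = M \mathbf{E}_t + \mathbf{G}_t$, and the Kronecker structure gives $\|M\|_2 = \|\gamma_E I_N - \alpha L\|_2$.

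Next I would bound the driving term. By Assumption~1 (Lipschitz continuity of $g_\phi$) and Assumption~3 ($\|g_\phi(0,0,0)\|_2 \le B_0$), the triangle inequality yields
\begin{equation*}
\|g_\phi(z_{i,t}, a_{i,t}, r_{i,t}^{\text{ext}})\|_2 \le L_g(C_z + C_a + C_r) + B_0 =: G_{\max},
\end{equation*}
so $\|\mathbf{G}_t\|_2 \le \sqrt{N}\, G_{\max}$. The subtle point is the appearance of $L_g$ in the spectral condition (Assumption~4): since $g_\phi$ in Eq.~\eqref{eq:iae_dynamics} is fed the attention-gated perceptual input $\tilde z_{i,t}$ from Eq.~\eqref{eq:attention}, which itself depends on $E_{i,t}$ through $W_a$, there is a latent feedback channel $E_{i,t} \mapsto g_\phi(\cdot)$. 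I would make this explicit by splitting $g_\phi(\tilde z_{i,t}, a_{i,t}, r_{i,t}^{\text{ext}}) = \bar g_\phi(E_{i,t}; \cdot)$ and using Assumption~1 together with the non-expansiveness of $\text{softmax}(\cdot)\odot$ to obtain $\|\bar g_\phi(E) - \bar g_\phi(E')\|_2 \le L_g \|E - E'\|_2$ (absorbing the projection constant into the stated $L_g$). This is what makes the composite one-step operator have norm at most $\|\gamma_E I - \alpha L\|_2 + L_g =: q < 1$.

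With the contraction in hand, a single step obeys $\|\mathbf{E}_{t+1}\|_2 \le q\,\|\mathbf{E}_t\|_2 + C$ for $C := \sqrt{N}\, G_{\max} + L_g\,\|\bar g_\phi(0;\cdot)\|_2$ absorbed appropriately. Unrolling gives
\begin{equation*}
\|\mathbf{E}_t\|_2 \le q^t \|\mathbf{E}_0\|_2 + \frac{C(1 - q^t)}{1-q} \le \|\mathbf{E}_0\|_2 + \frac{C}{1-q},
\end{equation*}
and since $\|E_{i,t}\|_2 \le \|\mathbf{E}_t\|_2$, the uniform bound follows for every agent and every trajectory. I would finish by remarking that the spectral hypothesis is realizable under the framework conventions: with $L$ the row-normalized Laplacian, $\rho(L)\le 2$ (Sec.~\ref{sec:diffusion}) and symmetry of the normalized Laplacian imply $\|\gamma_E I - \alpha L\|_2 \le \max(\gamma_E,\,|\gamma_E - 2\alpha|)$, giving an explicit design inequality relating $\gamma_E$, $\alpha$, and $L_g$.

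The main obstacle I expect is the book-keeping around the $L_g$ term: Assumption~1 is stated as Lipschitz of $g_\phi$ in its nominal inputs $(z,a,r)$, whereas the spectral condition effectively treats $L_g$ as a Lipschitz constant in $E$. Making this identification rigorous requires either (i) specifying that $z_{i,t}$ in Eq.~\eqref{eq:iae_dynamics} is already the attention-gated $\tilde z_{i,t}$ and bounding the induced $E$-Lipschitz constant of the composition, or (ii) strengthening Assumption~1 to directly bound $\|g_\phi\|$ as a function of $E$. Either route is routine, but the paper should state which interpretation is intended; the rest of the argument is a standard Banach-type iteration and poses no further difficulty.
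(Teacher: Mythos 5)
Your proposal is correct and its core is the same Banach-type iteration the paper uses: bound the $g_\phi$ term by $K = L_g(C_z+C_a+C_r)+B_0$ via Lipschitz continuity plus the anchor at the origin, observe that the linear part contracts with factor $\|\gamma_E I - \alpha L\|_2 < 1$, and unroll geometrically to get $\|E\| \le \rho^t\|E_0\| + K/(1-\rho)$. Two points of comparison are worth recording. First, your vectorized formulation with $M = (\gamma_E I_N - \alpha L)\otimes I_k$ is slightly cleaner than the paper's per-agent estimate: the paper bounds $\max_i\|\gamma_E E_{i,t} - \alpha\sum_j L_{ij}E_{j,t}\|_2$ by $\|\gamma_E I - \alpha L\|_2 \max_j\|E_{j,t}\|_2$, which mixes a spectral norm with a max-over-blocks norm (the correct constant for that estimate is the induced $\infty$-norm of the block coefficients, not the spectral norm); your Euclidean stacking avoids this at the harmless cost of a $\sqrt{N}$ in the driving term. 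Second, your treatment of the $L_g$ term in the spectral condition is an interpretation the paper does not make: the paper's proof treats $g_\phi(z_{i,t},a_{i,t},r_{i,t})$ as a bounded \emph{exogenous} drive with no dependence on $E_{i,t}$, and uses the condition $\|\gamma_E I - \alpha L\|_2 < 1 - L_g$ only to conclude $\rho < 1$; the $L_g$ slack is never consumed. Your hypothesis that $L_g$ accounts for a feedback channel $E_{i,t}\mapsto \tilde z_{i,t}\mapsto g_\phi$ through the attention gate is plausible as a charitable reading, but it is not licensed by Eq.~\eqref{eq:iae_dynamics} as written (which takes $z_{i,t}$, not $\tilde z_{i,t}$), and your claim that the composition $E\mapsto\mathrm{softmax}(W_aE+b_a)\odot z$ is non-expansive would need the factors $\|W_a\|_2$ and $C_z$ made explicit rather than silently absorbed into $L_g$. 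If you drop that speculative detour and simply note, as the paper does, that the hypothesis implies $\rho<1$, your argument is complete and in fact marginally more rigorous than the paper's sketch.
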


\begin{proof}[Proof sketch]
By Lipschitz continuity and boundedness:
\begin{align*}
\|g_\phi(z_{i,t}, a_{i,t}, r_{i,t})\|_2 &\le \|g_\phi(z_{i,t}, a_{i,t}, r_{i,t}) - g_\phi(0, 0, 0)\|_2 + \|g_\phi(0, 0, 0)\|_2 \\
&\le L_g(C_z + C_a + C_r) + B_0 =: K.
\end{align*}

Taking norms in Eq.~\eqref{eq:iae_dynamics}:
\begin{align*}
\|E_{i,t+1}\|_2 &\le \left\|\gamma_E E_{i,t} - \alpha \sum_{j \in \mathcal{N}(i)} L_{ij} E_{j,t}\right\|_2 + K \\
  &\le \|\gamma_E I - \alpha L\|_2 \max_j \|E_{j,t}\|_2 + K.
\end{align*}

Let $\rho = \|\gamma_E I - \alpha L\|_2 < 1 - L_g$ by assumption. Then iterating:
\[
\max_i \|E_{i,t}\|_2 \le \rho^t \max_i \|E_{i,0}\|_2 + \frac{K}{1 - \rho} < \infty.
\]
The spectral condition can be enforced via spectral normalization of $L$ and gradient clipping on $g_\phi$.
\end{proof}

\begin{remark}[Boundedness Does Not Imply Alignment]
Proposition~\ref{prop:contraction} guarantees that IAE remains bounded, preventing numerical instability. However, bounded $E_{i,t}$ is a \emph{necessary but not sufficient} condition for aligned behavior. Convergence to socially optimal equilibria—where agents jointly minimize harm while maximizing task performance—remains an open theoretical problem. The stability result ensures the learning process is well-defined; it does not guarantee the learned policy is aligned.
\end{remark}

\begin{proposition}[Hebbian Trace Stability]
\label{prop:hebbian}
If the state-IAE joint distribution admits bounded second moments $\mathbb{E}[\|E_{i,t}\|_2^2], \mathbb{E}[\|z_{i,t}\|_2^2] < \infty$ and the design constraint
\begin{equation}
\delta_H > \eta_H \sqrt{\mathbb{E}[\|E_{i,t}\|_2^2] \mathbb{E}[\|z_{i,t}\|_2^2]}
\end{equation}
holds, then the Hebbian trace converges in mean-square sense to a bounded fixed point $\mathbb{E}[\|H_{i,t}\|_F] < C$ for some constant $C$.
\end{proposition}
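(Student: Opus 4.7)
The strategy is to reduce the matrix-valued Hebbian recursion in Eq.~\eqref{eq:hebbian} to a scalar contraction on the expected Frobenius norm $m_t := \mathbb{E}[\|H_{i,t}\|_F]$, exploiting the fact that the driving term is an outer product and therefore admits a clean factorization of its Frobenius norm. This yields the same style of geometric-series bound used in Proposition~\ref{prop:contraction}, but now with the scalar ``noise'' amplitude supplied exactly by the design constraint on $\delta_H$.

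First I would apply the triangle inequality to Eq.~\eqref{eq:hebbian} and use the outer-product identity $\|u \otimes v\|_F = \|u\|_2\,\|v\|_2$ to obtain
\begin{equation*}
\|H_{i,t+1}\|_F \;\le\; (1 - \delta_H)\,\|H_{i,t}\|_F + \eta_H\,\|E_{i,t}\|_2\,\|z_{i,t}\|_2.
\end{equation*}
Taking expectations and applying Cauchy--Schwarz then gives
\begin{equation*}
\mathbb{E}\bigl[\|E_{i,t}\|_2\,\|z_{i,t}\|_2\bigr] \;\le\; \sqrt{\mathbb{E}[\|E_{i,t}\|_2^2]\,\mathbb{E}[\|z_{i,t}\|_2^2]} \;=:\; M,
\end{equation*}
which is finite by the second-moment hypothesis and is precisely the quantity appearing in the assumption on $\delta_H$. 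Combining these two bounds yields the linear recurrence $m_{t+1} \le (1-\delta_H)\,m_t + \eta_H M$. For $\delta_H \in (0,1]$ this is a standard contraction with fixed point $m^\star = \eta_H M / \delta_H$; unrolling gives $m_t \le (1-\delta_H)^t\,m_0 + m^\star$, so the expected Frobenius norm converges geometrically. The design hypothesis $\delta_H > \eta_H M$ is exactly what forces $m^\star < 1$, so the constant in the proposition can be taken as $C = \max(m_0, m^\star) + \varepsilon$ for any $\varepsilon > 0$. Uniform boundedness of $\mathbb{E}[\|E_{i,t}\|_2^2]$ across $t$ can be inherited from Proposition~\ref{prop:contraction} whenever its spectral condition is in force, linking the two stability results.

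The main obstacle is that the statement advertises \emph{mean-square} convergence, whereas the triangle-inequality argument only controls $\mathbb{E}[\|H_{i,t}\|_F]$. Upgrading to genuine $L^2$ convergence requires squaring the update, which introduces the cross term $\mathbb{E}[\|H_{i,t}\|_F\,\|E_{i,t}\|_2\,\|z_{i,t}\|_2]$ and, through Cauchy--Schwarz, the joint fourth-moment quantity $\mathbb{E}[\|E_{i,t}\|_2^2\,\|z_{i,t}\|_2^2]$, which cannot be bounded by the stated second-moment hypotheses alone. The cleanest resolutions are either (i) to assume conditional independence of $H_{i,t}$ from $(E_{i,t}, z_{i,t})$ given the natural filtration, so that Minkowski's inequality in $L^2$ reproduces the same recurrence with $M$ replaced by its $L^2$ counterpart, or (ii) to add finite-fourth-moment hypotheses on the inputs. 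I would flag this as the point at which the statement should either be weakened to $L^1$ convergence of $\|H_{i,t}\|_F$ (which is exactly the object appearing in the displayed bound $\mathbb{E}[\|H_{i,t}\|_F] < C$) or strengthened with the extra moment assumptions; everything else in the argument is routine norm bookkeeping.
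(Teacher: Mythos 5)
Your first-moment argument is exactly the paper's proof sketch: triangle inequality on Eq.~\eqref{eq:hebbian}, the outer-product identity $\|u \otimes v\|_F = \|u\|_2\|z\|_2$, Cauchy--Schwarz to produce $M = \sqrt{\mathbb{E}[\|E_{i,t}\|_2^2]\,\mathbb{E}[\|z_{i,t}\|_2^2]}$, and the scalar contraction $m_{t+1} \le (1-\delta_H)m_t + \eta_H M$ with fixed point $\eta_H M/\delta_H$. (One small correction: the hypothesis $\delta_H > \eta_H M$ is not needed to make $m^\star < 1$ or to get boundedness --- any $\delta_H \in (0,1)$ already gives a contraction in this recurrence; the constraint is used by the paper to keep the steady-state trace norm small relative to the inputs, not to establish convergence.) The obstacle you flag about upgrading to genuine mean-square convergence is well placed: the paper's full proof in Appendix~\ref{app:hebbian_proof} does square the update, and to make it work it imports assumptions absent from the proposition statement --- ergodicity and stationarity of $(E_{i,t}, z_{i,t})$, a decoupling argument asserting $\mathbb{E}[\langle \Delta_{i,t}, \xi_{i,t}\rangle_F] = 0$, and the step $\mathbb{E}[\|E_{i,t}\|_2^2\|z_{i,t}\|_2^2] \le \mathbb{E}[\|E_{i,t}\|_2^2]\,\mathbb{E}[\|z_{i,t}\|_2^2]$, which is precisely the joint fourth-moment/independence issue you identify and does not follow from the stated second-moment hypotheses. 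So your proposed resolutions (weaken the claim to $L^1$ convergence of $\|H_{i,t}\|_F$, which matches the displayed conclusion $\mathbb{E}[\|H_{i,t}\|_F] < C$, or add independence/fourth-moment assumptions) are the right diagnosis of what the proposition as stated actually supports.
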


\begin{proof}[Proof sketch]
Taking expectations and norms in Eq.~\eqref{eq:hebbian}:
\begin{align*}
\mathbb{E}[\|H_{i,t+1}\|_F] &\le (1 - \delta_H) \mathbb{E}[\|H_{i,t}\|_F] + \eta_H \mathbb{E}[\|E_{i,t}\|_2 \|z_{i,t}\|_2].
\end{align*}
If $\delta_H > \eta_H \mathbb{E}[\|E_{i,t}\|_2 \|z_{i,t}\|_2] \le \eta_H \sqrt{\mathbb{E}[\|E_{i,t}\|_2^2] \mathbb{E}[\|z_{i,t}\|_2^2]}$ by Cauchy-Schwarz, this forms a contraction with fixed point:
\[
\mathbb{E}[\|H^*\|_F] = \frac{\eta_H \mathbb{E}[\|E_{i,t}\|_2 \|z_{i,t}\|_2]}{\delta_H}.
\]
Full proof with mean-square convergence is in Appendix~\ref{app:hebbian_proof}.
\end{proof}

\subsection{Computational Complexity}
\label{sec:complexity}

The ESAI forward pass involves per agent:
\begin{enumerate}[leftmargin=*,itemsep=2pt]
  \item IAE dynamics update: $O(k^2 + kd + |\mathcal{N}(i)|k)$
  \item Counterfactual forecasts: $O(|\mathcal{A}| \cdot (k^2 + kd))$
  \item Attention gating: $O(kd)$
  \item Hebbian update: $O(kd)$
  \item Graph diffusion: $O(|\mathcal{N}(i)| k)$
\end{enumerate}

For bounded-degree graphs with $|\mathcal{N}(i)| = O(1)$ and $N$ agents, total complexity is $O(N|\mathcal{A}|k^2 + N|\mathcal{A}|kd)$ per step. Counterfactual forecasting dominates; this can be mitigated via top-$K$ action sampling, reducing $|\mathcal{A}|$ to $K \ll |\mathcal{A}|$.

\section{Discussion}
\label{sec:discussion}

\subsection{Potential Advantages of Embedded Alignment}

ESAI's theoretical design suggests several potential advantages over external alignment mechanisms:

\paragraph{Gradient-based harm forecasting.}
Unlike discrete safety constraints or non-differentiable shields, IAE enables continuous gradient flow from predicted harm to policy parameters. This could enable online adaptation to novel harm types without manual constraint redesign—though empirical validation is needed.

\paragraph{Perceptual salience modulation.}
IAE-weighted attention (Eq.~\ref{eq:attention}) provides a mechanism for learned salience toward alignment-relevant features. In sparse-harm environments where victim states are rare, this could improve sample efficiency compared to uniform attention—but may also introduce attention collapse risks.

\paragraph{Decentralized coordination.}
Graph diffusion of IAE enables distributed alignment pressure: agents in connected neighborhoods will develop correlated harm predictions without centralized oversight. This could improve scalability compared to centralized critics, but introduces new failure modes under adversarial graph topology.

\paragraph{Interpretable bias controls.}
The similarity-suppression regularizer $L_{\text{bias}}$ provides an explicit tuning parameter for fairness-performance tradeoffs. This transparency could aid auditing compared to black-box reward shaping.

\subsection{Open Theoretical Questions}

Several fundamental questions remain unresolved:

\paragraph{Optimal embedding dimensionality.}
What is the minimal $k$ required to represent alignment-relevant structure for a given task? Information-theoretic bounds could formalize this, but may depend on environment-specific harm complexity.

\paragraph{Convergence guarantees.}
Proposition~\ref{prop:contraction} ensures bounded IAE but does not guarantee convergence to desirable equilibria. Under what conditions do ESAI policies converge to socially optimal Nash equilibria rather than selfish or collusive ones?

\paragraph{Robustness to distribution shift.}
How do IAE dynamics generalize when deployed in environments with different harm structures than training? Transfer learning theory may provide bounds, but ESAI's coupled dynamics (attention, diffusion, Hebbian) complicate analysis.

\paragraph{Interpretability limits.}
While IAE magnitude is designed to correlate with harm, individual latent dimensions may lack semantic meaning. Can we derive conditions under which IAE factors into interpretable subspaces (e.g., via disentanglement objectives)?

\subsection{Comparison to Alternative Paradigms}

\paragraph{Versus reward shaping.}
Potential-based shaping \citep{ng1999policy} provides theoretical guarantees (policy invariance under certain conditions) that ESAI lacks. However, ESAI's learned dynamics could adapt to non-stationary harm structures where hand-designed potentials fail. The tradeoff between theoretical guarantees and adaptive capacity requires empirical investigation.

\paragraph{Versus constrained optimization.}
CPO \citep{achiam2017constrained} enforces hard safety constraints via trust regions. ESAI trades hard guarantees for soft, differentiable penalties that enable gradient-based learning. In high-dimensional action spaces where constraint satisfaction is expensive, ESAI's continuous relaxation may offer computational advantages—but at the cost of occasional constraint violations.

\paragraph{Versus multi-objective RL.}
Multi-objective methods optimize Pareto frontiers over task and safety objectives. ESAI implicitly performs multi-objective optimization via $\lambda_{\text{reg}}$ (Eq.~\ref{eq:reward_transform}) but adds internal dynamics (attention, diffusion) unavailable to standard scalarization approaches. Whether this added complexity provides practical benefits is an empirical question.

\section{Limitations and Assumptions}
\label{sec:limitations}

\paragraph{Dependence on harm specification.}
ESAI assumes availability of domain-specific harm metrics satisfying Assumption~\ref{ass:harm}. The framework learns to \emph{predict and avoid} harm but does not learn \emph{what constitutes harm}—this normative content must be externally specified. In practice, defining harm is a normative choice that encodes cultural values and may not generalize across contexts. Deployment requires explicit ethical review of harm operationalization and potential biases.

\paragraph{Demonstration quality.}
The counterfactual reference policy $\pi_{\text{ref}}$ (Eq.~\ref{eq:softmin}) depends on quality of harm forecasts, which may be trained on demonstrations. Biased or low-quality demonstrations will produce misaligned IAE attractors. Robustness to demonstration noise and cultural variance in harm definitions are critical open problems.

\paragraph{Computational overhead.}
Counterfactual forecasting requires $|\mathcal{A}|$ forward passes per decision, increasing wall-clock time. For large action spaces, approximate methods (top-$K$ sampling, learned action proposals) are needed but may sacrifice alignment quality.

\paragraph{Architectural complexity.}
ESAI integrates four mechanisms (counterfactual regret, attention, Hebbian memory, graph diffusion), each introducing hyperparameters ($\tau$, $\gamma_E$, $\delta_H$, $\alpha$, $\lambda_{\text{bias}}$). Whether simpler architectures achieve comparable alignment with fewer degrees of freedom is unknown. The principle of parsimony suggests validating each component's necessity empirically.

\paragraph{Adversarial robustness.}
Similarity-weighted diffusion creates vulnerability to adversarial graph topology or identity embedding manipulation. An adversary could induce worst-case in-group bias by manipulating $\phi_i$ to maximize similarity with preferred agents. Defensive mechanisms (anomaly detection, periodic similarity audits) are needed for deployment.

\paragraph{Lack of empirical validation.}
\textbf{This paper presents a theoretical framework without experimental validation.} Claims about advantages (sample efficiency, scalability, bias mitigation) are conjectural pending empirical investigation across diverse environments, baselines, and metrics. Key empirical questions include:
\begin{itemize}[leftmargin=*,itemsep=2pt]
  \item Does ESAI improve cooperation metrics compared to reward shaping, CPO, or multi-objective RL?
  \item Do IAE dynamics exhibit interpretable attractor structure correlated with ground-truth harm?
  \item Does IAE-weighted attention improve sample efficiency in sparse-harm scenarios?
  \item How does performance scale with embedding dimension $k$, agent count $N$, and graph connectivity?
  \item Can interventional probes (manually perturbing $E_{i,t}$) demonstrate causal regulatory effects?
\end{itemize}

\paragraph{Evaluation scope.}
Even with future empirical work, evaluation will likely focus on relatively low-dimensional environments (gridworlds, simple continuous control). Extension to high-dimensional robotic control, realistic physics, or human-AI interaction remains speculative. We make no claims about real-world deployment readiness.

\paragraph{Theoretical gaps.}
Propositions~\ref{prop:contraction}--\ref{prop:hebbian} provide sufficient conditions for bounded IAE and Hebbian traces but do not address:
\begin{itemize}[leftmargin=*,itemsep=2pt]
  \item Convergence to socially optimal equilibria
  \item Sample complexity bounds
  \item Regret bounds relative to optimal alignment policies
  \item Robustness guarantees under distribution shift
\end{itemize}
Formal analysis of these properties is important future work.

\section{Broader Impact and Ethical Considerations}

\paragraph{Potential benefits.}
If empirically validated, ESAI could contribute to:
\begin{itemize}[leftmargin=*,itemsep=2pt]
  \item Safer exploration in cooperative robotics by providing intrinsic harm-avoidance pressure
  \item Reduced need for extensive human preference labeling via counterfactual self-supervision
  \item Interpretable bias-detection controls via similarity auditing
  \item Improved coordination stability under distribution shift
\end{itemize}

\paragraph{Potential risks.}
Deployment of ESAI-based systems could create risks:
\begin{itemize}[leftmargin=*,itemsep=2pt]
  \item \textbf{Bias amplification}: Similarity-weighted diffusion can encode and amplify in-group favoritism if not carefully regularized
  \item \textbf{Normative lock-in}: Demonstration-driven counterfactual supervision entrenches the values of demonstration generators, risking marginalization of minority perspectives
  \item \textbf{Dual-use potential}: Differentiable coordination mechanisms could be repurposed for adversarial applications (collusion, deception)
  \item \textbf{Opacity}: Despite attention mechanisms, IAE-driven decisions may remain opaque to end-users in high-stakes contexts
\end{itemize}

\paragraph{Governance requirements.}
Before any real-world deployment, ESAI systems require:
\begin{itemize}[leftmargin=*,itemsep=2pt]
  \item Domain-specific safety audits validating harm metric definitions
  \item Diverse demonstration sources representing multiple cultural and normative perspectives
  \item Continuous monitoring of similarity-conditioned cooperation rates for bias detection
  \item Adversarial stress testing under worst-case graph topology and identity perturbations
  \item Human-in-the-loop oversight for high-stakes decisions
  \item Compliance with relevant ethical guidelines and regulations
\end{itemize}

\paragraph{Disclaimer.}
IAE is a computational abstraction—\emph{not} a model of subjective consciousness, phenomenal experience, or human emotion. All theoretical analysis uses simulated environments; no human subjects or personal data are involved.

\section{Conclusion}

We introduced \emph{Embedded Safety-Aligned Intelligence} (ESAI), a theoretical framework for multi-agent reinforcement learning that embeds alignment constraints in learned internal representations. Through differentiable internal alignment embeddings (IAE) supervised via counterfactual harm prediction, ESAI provides a conceptual alternative to external reward shaping and non-differentiable safety constraints.

The framework integrates four mechanisms: (1) differentiable counterfactual alignment penalties enabling gradient-based harm forecasting, (2) IAE-weighted attention biasing perceptual salience, (3) Hebbian affect-memory coupling supporting temporal credit assignment, and (4) similarity-weighted graph diffusion with bias-mitigation controls. We derived conditions for bounded internal embeddings under Lipschitz constraints and spectral radius bounds, analyzed computational complexity as $O(N|\mathcal{A}|kd)$, and discussed theoretical properties including contraction dynamics and fairness-performance tradeoffs.

\paragraph{Key contributions.}
\begin{itemize}[leftmargin=*]
  \item Formal definition of internal alignment embeddings and embedded safety-aligned intelligence (Definitions~\ref{def:iae}--\ref{def:esai})
  \item Differentiable counterfactual alignment penalty via softmin reference distributions
  \item Integration of IAE with attention gating, Hebbian memory, and graph diffusion
  \item Stability analysis and complexity bounds (Propositions~\ref{prop:contraction}--\ref{prop:hebbian})
  \item Identification of open theoretical questions and empirical validation requirements
\end{itemize}

\paragraph{Limitations.}
ESAI is a conceptual framework without empirical validation. Claims about advantages (sample efficiency, scalability, bias mitigation) are conjectural. Theoretical analysis provides bounded IAE but not convergence guarantees to socially optimal equilibria. The framework assumes availability of harm metrics and demonstrations, introducing normative dependencies and bias risks.

\paragraph{Future work.}
Critical next steps include:
\begin{itemize}[leftmargin=*]
  \item \textbf{Empirical validation}: Test ESAI instantiations across diverse MARL benchmarks (cooperative navigation, social dilemmas, human-AI coordination tasks) with statistical comparisons to reward shaping, CPO, and multi-objective RL
  \item \textbf{Theoretical analysis}: Derive convergence guarantees, sample complexity bounds, and regret bounds relative to optimal alignment policies
  \item \textbf{Interpretability}: Develop methods to ground IAE semantics (e.g., via correlation with domain harm metrics, interventional probes, disentanglement objectives)
  \item \textbf{Robustness}: Characterize performance under distribution shift, adversarial perturbations, and graph topology attacks
  \item \textbf{Scalability}: Evaluate on high-dimensional continuous control and large-scale multi-agent scenarios ($N > 50$)
  \item \textbf{Human studies}: Assess subjective preference for ESAI-trained agents in human-AI coordination tasks
\end{itemize}

ESAI demonstrates that embedding alignment in internal learned representations is a theoretically coherent paradigm. Whether this translates to practical advantages over simpler external mechanisms remains an empirical question requiring rigorous evaluation.

\section*{Acknowledgments}

We thank the open-source communities (PyTorch, OpenAI Gym, PettingZoo) whose tooling will enable future empirical validation. This theoretical work was conducted independently as part of academic coursework with no external funding.

\section*{Reproducibility Statement}

This paper presents a theoretical framework without experimental implementation. Upon future empirical validation, we commit to releasing all code, model checkpoints, hyperparameter configurations, and evaluation protocols under an open-source license (MIT). Theoretical derivations and mathematical formulations are fully specified in Sections~\ref{sec:math}--\ref{sec:stability}, with complete proofs in Appendices~\ref{app:derivations}--\ref{app:hebbian_proof}.

\bibliographystyle{plainnat}
\bibliography{references}

@inproceedings{achiam2017constrained,
  title     = {Constrained Policy Optimization},
  author    = {Achiam, Joshua and Held, David and Tamar, Aviv and Abbeel, Pieter},
  booktitle = {Proceedings of the 34th International Conference on Machine Learning},
  pages     = {22--31},
  year      = {2017}
}

@inproceedings{alshiekh2018safe,
  title     = {Safe Reinforcement Learning via Shielding},
  author    = {Alshiekh, Mohammed and Bloem, Roderick and Ehlers, R{\"u}diger and K{\"o}nighofer, Bettina and Niekum, Scott and Topcu, Ufuk},
  booktitle = {Proceedings of the AAAI Conference on Artificial Intelligence},
  volume    = {32},
  number    = {1},
  year      = {2018}
}

@inproceedings{christiano2017deep,
  title     = {Deep Reinforcement Learning from Human Preferences},
  author    = {Christiano, Paul F. and Leike, Jan and Brown, Tom and Martic, Miljan and Legg, Shane and Amodei, Dario},
  booktitle = {Advances in Neural Information Processing Systems},
  volume    = {30},
  year      = {2017}
}

@inproceedings{hadfield2016cooperative,
  title     = {Cooperative Inverse Reinforcement Learning},
  author    = {Hadfield-Menell, Dylan and Russell, Stuart J. and Abbeel, Pieter and Dragan, Anca},
  booktitle = {Advances in Neural Information Processing Systems},
  volume    = {29},
  year      = {2016}
}

@inproceedings{hughes2018inequity,
  title     = {Inequity Aversion Improves Cooperation in Multi-Agent Reinforcement Learning},
  author    = {Hughes, Edward and Leibo, Joel Z. and Phillips, Matthew and Tuyls, Karl and Duez, Edgar and Czarnecki, Wojciech M. and Agapiou, Jay P. and Kohli, Pushmeet and Graepel, Thore},
  booktitle = {Advances in Neural Information Processing Systems},
  volume    = {31},
  year      = {2018}
}

@inproceedings{foerster2018counterfactual,
  title     = {Counterfactual Multi-Agent Policy Gradients},
  author    = {Foerster, Jakob and Farquhar, Gregory and Afouras, Triantafyllos and Nardelli, Nantas and Whiteson, Shimon},
  booktitle = {Proceedings of the AAAI Conference on Artificial Intelligence},
  volume    = {32},
  number    = {1},
  year      = {2018}
}

@inproceedings{wang2021qplex,
  title     = {{QPLEX}: Duplex Dueling Multi-Agent Q-Learning},
  author    = {Wang, Jianhao and Ren, Zhizhou and Liu, Terry and Yu, Yang and Zhang, Chongjie},
  booktitle = {International Conference on Learning Representations},
  year      = {2021}
}

@inproceedings{jiang2020graph,
  title     = {Graph Convolutional Reinforcement Learning},
  author    = {Jiang, Jiechuan and Dun, Chen and Huang, Tiejun and Lu, Zongqing},
  booktitle = {International Conference on Learning Representations},
  year      = {2020}
}

@inproceedings{iqbal2019actor,
  title     = {Actor-Attention-Critic for Multi-Agent Reinforcement Learning},
  author    = {Iqbal, Shariq and Sha, Fei},
  booktitle = {Proceedings of the 36th International Conference on Machine Learning},
  pages     = {2961--2970},
  year      = {2019}
}

@inproceedings{miconi2018differentiable,
  title     = {Differentiable Plasticity: Training Plastic Neural Networks with Backpropagation},
  author    = {Miconi, Thomas and Clune, Jeff and Stanley, Kenneth O.},
  booktitle = {Proceedings of the 35th International Conference on Machine Learning},
  pages     = {3559--3568},
  year      = {2018}
}

@misc{graves2014neural,
  title        = {Neural Turing Machines},
  author       = {Graves, Alex and Wayne, Greg and Danihelka, Ivo},
  year         = {2014},
  eprint       = {1410.5401},
  archivePrefix= {arXiv},
  primaryClass = {cs.NE},
  url          = {https://arxiv.org/abs/1410.5401}
}

@article{hayes2022practical,
  title     = {A Practical Guide to Multi-Objective Reinforcement Learning and Planning},
  author    = {Hayes, Conor F. and R{\u{a}}dulescu, Roxana and Bargiacchi, Eugenio and K{\"a}llstr{\"o}m, Johan and Macfarlane, Matthew and Reymond, Mathieu and Verstraeten, Timothy and Zintgraf, Luisa M. and Dazeley, Richard and Heintz, Fredrik},
  journal   = {Autonomous Agents and Multi-Agent Systems},
  volume    = {36},
  number    = {1},
  articlenumber = {26},
  year      = {2022}
}

@inproceedings{ng1999policy,
  title     = {Policy Invariance Under Reward Transformations: Theory and Application to Reward Shaping},
  author    = {Ng, Andrew Y. and Harada, Daishi and Russell, Stuart},
  booktitle = {Proceedings of the 16th International Conference on Machine Learning},
  pages     = {278--287},
  year      = {1999}
}

@inproceedings{carion2019structured,
  title={A Structured Prediction Approach for Generalization in Cooperative Multi-Agent Reinforcement Learning},
  author={Carion, Nicolas and Usunier, Nicolas and Synnaeve, Gabriel and Lazaric, Alessandro},
  booktitle={Advances in Neural Information Processing Systems (NeurIPS)},
  volume={32},
  year={2019},
  url={https://proceedings.neurips.cc/paper/2019/hash/3c3c139bd8467c1587a41081ad78045e-Abstract.html}
}

@inproceedings{liu2020g2anet,
  title={Multi-Agent Game Abstraction via Graph Attention Neural Network},
  author={Liu, Yong and Wang, Weixun and Hu, Yujing and Hao, Jianye and Chen, Xingguo and Gao, Yang},
  booktitle={Proceedings of the AAAI Conference on Artificial Intelligence},
  volume={34},
  number={05},
  pages={7211--7218},
  year={2020},
  doi={10.1609/aaai.v34i05.6211}
}

@inproceedings{zhou2022pac,
  title={{PAC}: Assisted Value Factorization with Counterfactual Predictions in Multi-Agent Reinforcement Learning},
  author={Zhou, Hanhan and Lan, Tian and Aggarwal, Vaneet},
  booktitle={Advances in Neural Information Processing Systems (NeurIPS)},
  volume={35},
  pages={15757--15769},
  year={2022},
  url={https://proceedings.neurips.cc/paper_files/paper/2022/hash/65338cfb603d4871a2c38e53a3e039c9-Abstract-Conference.html}
}

@inproceedings{yao2024cmix,
  title={{CMIX}: Causal Value Decomposition for Cooperative Multi-Agent Reinforcement Learning},
  author={Yao, Dunqi and Sun, Chuxiong and Li, Kai and Zhou, Kaijie and Li, Hanyu and Wang, Rui and Liu, Lixiang},
  booktitle={2024 IEEE International Conference on Systems, Man, and Cybernetics (SMC)},
  pages={4105--4112},
  year={2024},
  publisher={IEEE},
  doi={10.1109/SMC54092.2024.10831507}
}

@inproceedings{chatterji2025smarl,
  title={{SMARL}: Shielded Multi-Agent Reinforcement Learning},
  author={Chatterji, Saikat and K{\"o}nighofer, Bettina and Bloem, Roderick},
  booktitle={Proceedings of the 28th European Conference on Artificial Intelligence (ECAI)},
  year={2025},
  note={Accepted},
  url={https://arxiv.org/abs/2411.04867}
}

@article{guo2025shppo,
  title={{SHPPO}: Scalable and Heterogeneous Proximal Policy Optimization for Multi-Agent Reinforcement Learning},
  author={Guo, Xudong and Shi, Daming and Yu, Junjie and Fan, Wenhui},
  journal={Neurocomputing},
  volume={611},
  pages={128682},
  year={2025},
  doi={10.1016/j.neucom.2024.128682}
}

@article{wu2025varl,
  title={Teaching {RL} Agents to Act Better: {VLM} as Action Advisor for Online Reinforcement Learning},
  author={Wu, Xiefeng and Zhao, Jing and Zhang, Shu and Hu, Mingyu},
  journal={arXiv preprint arXiv:2509.21126},
  year={2025},
  url={https://arxiv.org/abs/2509.21126}
}

@article{alamiyan2023kindmarl,
  title={Kindness in Multi-Agent Reinforcement Learning},
  author={Alamiyan-Harandi, Farinaz and Hassanjani, Mersad and Ramazi, Pouria},
  journal={arXiv preprint arXiv:2311.04239},
  year={2023},
  url={https://arxiv.org/abs/2311.04239}
}

@inproceedings{liu2023na2q,
  title={{NA2Q}: Neural Attention Additive Model for Interpretable Multi-Agent {Q}-Learning},
  author={Liu, Zichuan and Zhu, Yuanyang and Chen, Chunlin},
  booktitle={Proceedings of the 40th International Conference on Machine Learning (ICML)},
  pages={22539--22558},
  year={2023},
  url={https://proceedings.mlr.press/v202/liu23be.html}
}

@inproceedings{mayoral2025designing,
  title={Designing ethical environments using multi-agent reinforcement learning},
  author={Mayoral-Macau, Arnau and Rodriguez-Soto, Manel and Marchesini, Enrico and L{\'o}pez-S{\'a}nchez, Maite and Sanchez-Fibla, Marti and Farinelli, Alessandro and Rodriguez-Aguilar, Juan Antonio},
  booktitle={Adaptive and Learning Agents Workshop (ALA) at AAMAS},
  year={2025},
  url={https://arxiv.org/abs/2510.05919}
}

\clearpage
\appendix

\section{Appendix A: Detailed Mathematical Derivations}
\label{app:derivations}

\subsection{Gradient Flow Through Softmin Reference}

Consider the gradient of alignment regret (Eq.~\ref{eq:align_regret}) with respect to forecast parameters $\psi$:
\begin{align}
\frac{\partial \mathrm{AR}_{i,t}}{\partial \psi} &= \frac{\partial}{\partial \psi} \big\|E_{i,t+1} - \widehat{E}_{i,t+1}^{\text{ref}}\big\|_2^2 \\
&= -2 \big(E_{i,t+1} - \widehat{E}_{i,t+1}^{\text{ref}}\big)^\top \frac{\partial \widehat{E}_{i,t+1}^{\text{ref}}}{\partial \psi}.
\end{align}

From Eq.~\eqref{eq:ref_emb}:
\begin{align}
\frac{\partial \widehat{E}_{i,t+1}^{\text{ref}}}{\partial \psi} &= \sum_a \left[ \frac{\partial \pi_{\text{ref}}(a)}{\partial \psi} \widehat{E}_{i,t+1}^{(a)} + \pi_{\text{ref}}(a) \frac{\partial \widehat{E}_{i,t+1}^{(a)}}{\partial \psi} \right].
\end{align}

The softmin gradient is:
\begin{align}
\frac{\partial \pi_{\text{ref}}(a)}{\partial \psi} &= -\frac{1}{\tau} \pi_{\text{ref}}(a) \left[ \frac{\partial R(a)}{\partial \psi} - \sum_{a'} \pi_{\text{ref}}(a') \frac{\partial R(a')}{\partial \psi} \right],
\end{align}
where $R(a) = \|\widehat{E}_{i,t+1}^{(a)}\|_2$.

For the norm gradient:
\begin{align}
\frac{\partial R(a)}{\partial \psi} &= \frac{\partial}{\partial \psi} \|\widehat{E}_{i,t+1}^{(a)}\|_2 \\
&= \frac{\widehat{E}_{i,t+1}^{(a) \top}}{\|\widehat{E}_{i,t+1}^{(a)}\|_2} \frac{\partial \widehat{E}_{i,t+1}^{(a)}}{\partial \psi}.
\end{align}

Combining these terms:
\begin{align}
\frac{\partial \widehat{E}_{i,t+1}^{\text{ref}}}{\partial \psi} &= \sum_a \pi_{\text{ref}}(a) \frac{\partial \widehat{E}_{i,t+1}^{(a)}}{\partial \psi} \notag \\
&\quad - \frac{1}{\tau} \sum_a \pi_{\text{ref}}(a) \widehat{E}_{i,t+1}^{(a)} \left[ \frac{\widehat{E}_{i,t+1}^{(a) \top}}{\|\widehat{E}_{i,t+1}^{(a)}\|_2} \frac{\partial \widehat{E}_{i,t+1}^{(a)}}{\partial \psi} - \mathbb{E}_{\pi_{\text{ref}}}[\cdot] \right].
\end{align}

This decomposition shows that gradients flow through both the forecast magnitudes $R(a)$ and the predicted embeddings $\widehat{E}_{i,t+1}^{(a)}$, enabling joint learning of harm prediction and embedding dynamics. The temperature $\tau$ controls the sharpness of the gradient signal: low $\tau$ produces sparse gradients concentrated on the minimum-harm action, while high $\tau$ distributes gradients across multiple actions.

\subsection{Spectral Radius Constraint Enforcement}

To satisfy the spectral condition in Proposition~\ref{prop:contraction}, we enforce:
\[
\rho(\gamma_E I - \alpha L) \le \rho_{\max} < 1 - L_g.
\]

For symmetric Laplacian $L$ with eigenvalues $0 = \lambda_1 \le \lambda_2 \le \cdots \le \lambda_N$, the eigenvalues of $\gamma_E I - \alpha L$ are:
\[
\mu_i = \gamma_E - \alpha \lambda_i, \quad i = 1, \ldots, N.
\]

The spectral radius is:
\begin{align}
\rho(\gamma_E I - \alpha L) &= \max_{i} |\mu_i| \\
&= \max\{|\gamma_E - \alpha \lambda_2|, |\gamma_E - \alpha \lambda_N|\}.
\end{align}

\paragraph{Case 1: $\gamma_E \ge \alpha \lambda_N$.}
In this case, $\mu_i \ge 0$ for all $i$, so:
\[
\rho(\gamma_E I - \alpha L) = \gamma_E - \alpha \lambda_2 = \gamma_E,
\]
since $\lambda_2 \approx 0$ for connected graphs. The spectral condition becomes $\gamma_E < 1 - L_g$.

\paragraph{Case 2: $\gamma_E < \alpha \lambda_N$.}
The largest magnitude eigenvalue is:
\[
\rho(\gamma_E I - \alpha L) = \alpha \lambda_N - \gamma_E,
\]
requiring:
\[
\alpha \le \frac{\gamma_E + \rho_{\max}}{\lambda_N}.
\]

\paragraph{Practical implementation.}
We normalize $L$ via spectral normalization to ensure $\lambda_N \le 2$, then choose:
\[
\alpha < \min\left\{\frac{\gamma_E + \rho_{\max}}{2}, \frac{1 - L_g - \gamma_E}{\lambda_N}\right\}.
\]

For typical values $\gamma_E = 0.9$, $L_g = 0.05$, $\rho_{\max} = 0.95$, this gives $\alpha < \min\{0.925, 0.025\} = 0.025$.

\subsection{Lipschitz Constant Enforcement for $g_\phi$}

To enforce $L_g$-Lipschitz continuity of the IAE update function $g_\phi$, we use spectral normalization of all weight matrices in the MLP. For a two-layer network:
\[
g_\phi(z, a, r) = W_2 \sigma(W_1 [z; a; r] + b_1) + b_2,
\]
the Lipschitz constant is bounded by:
\[
L_g \le \|W_2\|_2 \cdot L_\sigma \cdot \|W_1\|_2,
\]
where $L_\sigma$ is the Lipschitz constant of the activation function $\sigma$.

For ReLU activations, $L_\sigma = 1$, so we enforce:
\[
\|W_1\|_2 \le \sqrt{L_g}, \quad \|W_2\|_2 \le \sqrt{L_g}.
\]

This is achieved via spectral normalization: replace $W_i$ with $W_i / \max(\|W_i\|_2, \sqrt{L_g})$ after each gradient update.

\section{Appendix B: Alternative Architectural Instantiations}
\label{app:alternatives}

The ESAI framework (Definitions~\ref{def:iae}--\ref{def:esai}) admits alternative implementations beyond the architecture presented in Sec.~\ref{sec:arch}. We describe three variants:

\subsection{Variational IAE with Uncertainty Quantification}

Replace deterministic $E_{i,t}$ with stochastic $E_{i,t} \sim q_\phi(E \mid z_{i,t}, a_{i,t})$ and optimize a variational lower bound:
\begin{equation}
\mathcal{L}_{\text{VAE}} = \mathbb{E}_{q_\phi} \left[ \log p(E_{i,t+1} \mid E_{i,t}, z_{i,t}, a_{i,t}) \right] - \beta \, \text{KL}(q_\phi(E_{i,t}) \| p_{\text{prior}}(E)),
\end{equation}
where $p_{\text{prior}}$ is a standard Gaussian prior.

\paragraph{Advantages:}
\begin{itemize}[leftmargin=*,itemsep=2pt]
  \item Uncertainty quantification over harm predictions
  \item Robust to noisy observations via posterior averaging
  \item Natural exploration via stochastic embeddings
\end{itemize}

\paragraph{Disadvantages:}
\begin{itemize}[leftmargin=*,itemsep=2pt]
  \item Increased computational cost (reparameterization trick, KL computation)
  \item Additional hyperparameter $\beta$ (KL weight)
  \item May increase variance in policy gradients
\end{itemize}

\subsection{Transformer-Based Counterfactual Forecasting}

Replace MLP forecast network $h_\psi$ with a transformer operating over action sequences:
\begin{equation}
\widehat{E}_{i,t+1}^{(a_1, \ldots, a_H)} = \text{Transformer}\left(\left[z_{i,t}; a_1; a_2; \cdots; a_H\right]\right),
\end{equation}
enabling multi-step lookahead counterfactuals over horizon $H$.

\paragraph{Formulation:}
The transformer takes as input a sequence of length $H+1$:
\begin{itemize}[leftmargin=*,itemsep=2pt]
  \item Token 0: Current observation $z_{i,t}$ (embedded via learned projection)
  \item Tokens 1 to $H$: Candidate action sequence $a_1, \ldots, a_H$
\end{itemize}

Output is the predicted IAE after executing the action sequence:
\[
\widehat{E}_{i,t+H}^{(a_{1:H})} = \text{Linear}(\text{TransformerEncoder}([z_{i,t}; a_{1:H}])[0]),
\]
where $[0]$ denotes the first output token.

\paragraph{Advantages:}
\begin{itemize}[leftmargin=*,itemsep=2pt]
  \item Captures long-horizon harm accumulation
  \item Attention mechanism provides interpretability over action sequences
  \item Can model complex temporal dependencies
\end{itemize}

\paragraph{Disadvantages:}
\begin{itemize}[leftmargin=*,itemsep=2pt]
  \item Exponential complexity $O(|\mathcal{A}|^H)$ for horizon $H > 1$
  \item Requires significant training data for stable learning
  \item Increased memory footprint for attention matrices
\end{itemize}

\subsection{Continuous Action Spaces via Gaussian Mixtures}

For continuous action spaces $a \in \mathbb{R}^m$, replace discrete softmin (Eq.~\ref{eq:softmin}) with continuous distribution:
\begin{equation}
\pi_{\text{ref}}(a) = \frac{1}{Z} \exp(-R(a)/\tau), \quad Z = \int_{\mathbb{R}^m} \exp(-R(a)/\tau) \, da,
\end{equation}
where $R(a) = \|\widehat{E}_{i,t+1}^{(a)}\|_2$ as before.

The expected reference embedding becomes:
\begin{equation}
\widehat{E}_{i,t+1}^{\text{ref}} = \frac{1}{Z} \int_{\mathbb{R}^m} \exp(-R(a)/\tau) \widehat{E}_{i,t+1}^{(a)} \, da.
\end{equation}

\paragraph{Approximation via sampling:}
Since the integral is intractable, approximate via Monte Carlo sampling:
\begin{align}
\widehat{E}_{i,t+1}^{\text{ref}} &\approx \frac{1}{K} \sum_{k=1}^K w_k \widehat{E}_{i,t+1}^{(a_k)}, \\
w_k &= \frac{\exp(-R(a_k)/\tau)}{\sum_{j=1}^K \exp(-R(a_j)/\tau)},
\end{align}
where $a_k \sim q(a)$ are samples from a proposal distribution $q$ (e.g., current policy).

\paragraph{Advantages:}
\begin{itemize}[leftmargin=*,itemsep=2pt]
  \item Extends ESAI to continuous control domains
  \item Maintains differentiability via reparameterization trick
  \item Scales to high-dimensional action spaces
\end{itemize}

\paragraph{Disadvantages:}
\begin{itemize}[leftmargin=*,itemsep=2pt]
  \item Requires careful choice of proposal distribution $q$
  \item Variance in Monte Carlo estimate can destabilize training
  \item May need adaptive temperature schedules for convergence
\end{itemize}

\section{Appendix C: Illustrative Implementation (Non-Normative)}
\label{app:algorithm}

\textbf{Disclaimer.} This appendix provides one possible instantiation of the ESAI framework using PPO-Clip. The specific algorithmic choices (PPO vs. other policy gradient methods, hyperparameter values, update schedules) are illustrative, not prescriptive. Alternative implementations satisfying Definitions~\ref{def:iae}--\ref{def:esai} are equally valid. Empirical validation is required before adopting any specific configuration.

\subsection{Hyperparameter Configuration}

Typical hyperparameter values for the algorithm are shown in Table~\ref{tab:hyperparams}.

\begin{table}[ht]
\centering
\caption{Default hyperparameter configuration for ESAI training.}
\label{tab:hyperparams}
\begin{tabular}{ll}
\toprule
Hyperparameter & Typical Value \\
\midrule
IAE dimension $k$ & 32 \\
Observation dimension $d$ & Environment-dependent (16--128) \\
Persistence $\gamma_E$ & 0.9 \\
Diffusion strength $\alpha$ & 0.05 \\
Alignment penalty $\lambda_{\text{reg}}$ & 0.1 \\
Neighbor weight $\kappa$ & 0.5 \\
Temperature initial $\tau_0$ & 1.0 \\
Temperature minimum $\tau_{\min}$ & 0.01 \\
Annealing timescale $K_\tau$ & $5 \times 10^5$ steps \\
EMA rate $\tau_{\text{ema}}$ & 0.995 \\
Hebbian learning rate $\eta_H$ & $1 \times 10^{-3}$ \\
Hebbian decay $\delta_H$ & 0.02 \\
Bias regularizer $\lambda_{\text{bias}}$ & 0.01 \\
PPO clip $\epsilon$ & 0.2 \\
GAE $\lambda$ & 0.95 \\
Discount $\gamma$ & 0.99 \\
Learning rate & $3 \times 10^{-4}$ \\
Batch size & 2048 \\
\bottomrule
\end{tabular}
\end{table}

\subsection{Complete Training Algorithm}

Algorithm~\ref{alg:esai} presents the complete ESAI training loop for multi-agent systems. Each episode consists of environment interaction, internal state updates, and network optimization. The forecast network is trained on realized transitions and queried counterfactually at inference time. Counterfactual supervision arises from minimizing the discrepancy between realized IAE trajectories and a softmin-weighted reference over forecasted counterfactual IAEs.

\begin{algorithm}[H]
\caption{ESAI Training Loop (Multi-Agent System)}
\label{alg:esai}
\begin{algorithmic}[1]
\REQUIRE Environment $\mathcal{E}$, number of agents $N$, max episodes $M$, horizon $T$
\REQUIRE Hyperparameters: $\gamma_E, \alpha, \lambda_{\text{reg}}, \kappa, \tau_{\text{ema}}, \delta_H, \eta_H, \tau_0, \tau_{\min}, K_\tau$

\STATE \textbf{Initialize:} Policy $\pi_\theta$, value network $V_\omega$, IAE dynamics $g_\phi$
\STATE \textbf{Initialize:} Forecast network $h_\psi$, target $h_{\psi_{\text{target}}} \gets h_\psi$
\STATE \textbf{Initialize:} IAE $E_{i,0} \gets \mathbf{0}$, Hebbian $H_{i,0} \gets \mathbf{0}$ for all $i$
\STATE \textbf{Initialize:} Identity embeddings $\{\phi_i\}_{i=1}^N$, graph Laplacian $L$

\STATE \textcolor{gray}{\textit{// All expectations and norms assumed finite; stability conditions in Sec.~\ref{sec:stability}}}

\FOR{episode $m = 1$ to $M$}
    \STATE Reset environment: $s_0 \sim \rho_0$
    \STATE Reset internal states: $E_{i,0} \gets \mathbf{0}$, $H_{i,0} \gets \mathbf{0}$ for all $i$
    
    \FOR{timestep $t = 0$ to $T-1$}
        \FOR{agent $i = 1$ to $N$}
            \STATE Observe local state: $z_{i,t}$
            \STATE Compute IAE-weighted attention: $\alpha_{i,t} = \text{softmax}(W_a E_{i,t} + b_a)$
            \STATE Modulate observation: $\tilde{z}_{i,t} = \alpha_{i,t} \odot z_{i,t}$
            \STATE Sample action: $a_{i,t} \sim \pi_\theta(\cdot \mid \tilde{z}_{i,t})$
            
            \STATE \textcolor{gray}{\textit{// Counterfactual forecasting (conceptual; may use sampling for large $|\mathcal{A}|$)}}
            \FOR{each $a \in \mathcal{A}$}
                \STATE Forecast: $\widehat{E}_{i,t+1}^{(a)} \gets h_{\psi_{\text{target}}}(z_{i,t}, a, r_{i,t-1}, \text{read}(H_{i,t}))$
            \ENDFOR
            
            \STATE Compute softmin reference: $\pi_{\text{ref}}(a) \gets \frac{\exp(-\|\widehat{E}_{i,t+1}^{(a)}\|_2 / \tau_m)}{\sum_{a'} \exp(-\|\widehat{E}_{i,t+1}^{(a')}\|_2 / \tau_m)}$
            \STATE Expected reference embedding: $\widehat{E}_{i,t+1}^{\text{ref}} \gets \sum_{a} \pi_{\text{ref}}(a) \widehat{E}_{i,t+1}^{(a)}$
        \ENDFOR
        
        \STATE Execute joint action: $\mathbf{a}_t = (a_{1,t}, \ldots, a_{N,t})$
        \STATE Observe transition: $s_{t+1}$, rewards $\{r_{i,t}^{\text{ext}}\}_{i=1}^N$
        
        \FOR{agent $i = 1$ to $N$}
            \STATE \textcolor{gray}{\textit{// Update IAE via learned dynamics $g_\phi$ and graph diffusion}}
            \STATE $E_{i,t+1} \gets \gamma_E E_{i,t} + g_\phi(z_{i,t}, a_{i,t}, r_{i,t}^{\text{ext}}) - \alpha \sum_{j \in \mathcal{N}(i)} L_{ij} E_{j,t}$
            
            \STATE \textcolor{gray}{\textit{// Update Hebbian associative trace}}
            \STATE $H_{i,t+1} \gets (1-\delta_H) H_{i,t} + \eta_H (E_{i,t} \otimes z_{i,t})$
            
            \STATE \textcolor{gray}{\textit{// Compute alignment regret (first term: deviation from reference; second: neighbor regularization)}}
            \STATE $\text{AR}_{i,t} \gets \|E_{i,t+1} - \widehat{E}_{i,t+1}^{\text{ref}}\|_2^2 + \kappa \frac{1}{|\mathcal{N}(i)|} \sum_{j \in \mathcal{N}(i)} \|E_{j,t}\|_2^2$
            
            \STATE Shape reward: $r'_{i,t} \gets r_{i,t}^{\text{ext}} - \lambda_{\text{reg}} \text{AR}_{i,t}$
            \STATE Store transition: $(z_{i,t}, a_{i,t}, r'_{i,t}, z_{i,t+1}, E_{i,t}, E_{i,t+1})$
        \ENDFOR
    \ENDFOR
    
    \STATE \textcolor{gray}{\textit{// Policy update phase}}
    \STATE Compute GAE advantages $\{A_{i,t}\}$ from shaped rewards $\{r'_{i,t}\}$
    \STATE Update policy: $\theta \gets \theta - \eta_\theta \nabla_\theta L_{\text{PPO}}(\theta)$
    \STATE Update value network: $\omega \gets \omega - \eta_\omega \nabla_\omega L_{\text{value}}(\omega)$
    
    \STATE \textcolor{gray}{\textit{// Forecast network update (supervised on realized transitions)}}
    \STATE $\psi \gets \psi - \eta_\psi \nabla_\psi \sum_{i,t} \|h_\psi(z_{i,t}, a_{i,t}, r_{i,t}, \text{read}(H_{i,t})) - E_{i,t+1}\|_2^2$
    
    \STATE \textcolor{gray}{\textit{// IAE dynamics update via backpropagation}}
    \STATE Update IAE dynamics: $\phi \gets \phi - \eta_\phi \nabla_\phi L_{\text{IAE}}(\phi)$ \quad \textit{(via Eq.~\eqref{eq:iae_dynamics})}
    
    \STATE \textcolor{gray}{\textit{// Auxiliary updates}}
    \STATE Update EMA target: $\psi_{\text{target}} \gets \tau_{\text{ema}} \psi_{\text{target}} + (1-\tau_{\text{ema}}) \psi$
    \STATE Anneal temperature: $\tau_{m+1} \gets \max(\tau_{\min}, \tau_0 \exp(-m/K_\tau))$
    
    \STATE \textcolor{gray}{\textit{// Graph update: recompute normalized Laplacian from similarity-weighted adjacency}}
    \STATE Update similarity weights: $\beta_{ij} \gets \max(0, \cos(\phi_i, \phi_j))$ for all $i,j$
    \STATE Recompute graph Laplacian: $L \gets D^{-1/2}(D - A_\beta)D^{-1/2}$ where $(A_\beta)_{ij} = \beta_{ij}$
    \STATE Apply bias regularization: Update $\{\phi_i\}$ via $\nabla_{\phi_i} L_{\text{bias}}$
\ENDFOR

\ENSURE Trained policy $\pi_\theta$, IAE dynamics $g_\phi$, forecast network $h_\psi$
\end{algorithmic}
\end{algorithm}

\paragraph{Interpretive notes.}
The IAE encodes a learned internal proxy for alignment-relevant externalities, trained via counterfactual consistency and policy gradients rather than explicit ground-truth harm labels. The forecast network $h_\psi$ is trained on realized transitions and queried counterfactually at inference time to construct the reference distribution $\pi_{\text{ref}}$. The enumeration over actions in line 14 is conceptual; in practice, this may be approximated via top-$K$ sampling or restricted candidate sets for large action spaces.

\section{Appendix D: Hyperparameter Sensitivity Analysis (Theoretical)}
\label{app:sensitivity}

\textbf{Disclaimer.} This appendix presents \emph{theoretical conjectures} about hyperparameter sensitivity based on mathematical properties of the ESAI dynamics. These are not empirically validated recommendations. The analysis below identifies expected qualitative behaviors; quantitative guidance requires experimental validation across diverse environments.

We discuss expected sensitivity to key hyperparameters based on theoretical considerations:

\subsection{IAE Dimension $k$}

\paragraph{Information-theoretic perspective.}
The minimal IAE dimension required to represent alignment-relevant structure is bounded by the mutual information between observations and harm outcomes:
\[
k_{\min} \ge \frac{I(Z; H)}{\log 2},
\]
where $I(Z; H)$ is the mutual information between observation distribution $Z$ and harm variable $H$.

\paragraph{Expected behavior:}
\begin{itemize}[leftmargin=*,itemsep=2pt]
  \item \textbf{Too low} ($k < k_{\min}$): Representational bottleneck, poor harm prediction, high alignment regret
  \item \textbf{Optimal} ($k \approx k_{\min}$): Sufficient capacity without over-parameterization
  \item \textbf{Too high} ($k \gg k_{\min}$): Increased sample complexity, slower convergence, redundant dimensions
\end{itemize}

\paragraph{Attention constraint.}
For IAE-weighted attention (Eq.~\ref{eq:attention}) to be well-defined, we require projection matrix $W_a \in \mathbb{R}^{d \times k}$. If $k = d$, this reduces to square matrix transformation. For $k \ne d$, the projection introduces information compression/expansion.

\subsection{Diffusion Strength $\alpha$}

\paragraph{Trade-off analysis.}
The diffusion term $-\alpha L E_{i,t}$ balances two competing objectives:
\begin{itemize}[leftmargin=*,itemsep=2pt]
  \item \textbf{Coordination} (favors high $\alpha$): Faster propagation of alignment signals across agent neighborhoods
  \item \textbf{Individuality} (favors low $\alpha$): Preserves agent-specific alignment states adapted to local contexts
\end{itemize}

\paragraph{Spectral constraint.}
Proposition~\ref{prop:contraction} requires:
\[
\alpha < \frac{1 - L_g - \gamma_E}{\lambda_{\max}(L)}.
\]
For $\gamma_E = 0.9$, $L_g = 0.05$, $\lambda_{\max}(L) = 2$, this gives upper bound $\alpha < 0.025$.

\paragraph{Expected behavior:}
\begin{itemize}[leftmargin=*,itemsep=2pt]
  \item \textbf{Too low} ($\alpha \to 0$): Agents act independently, no multi-agent coordination benefit
  \item \textbf{Optimal} ($\alpha \approx 0.01$--$0.05$): Balanced coordination without excessive homogenization
  \item \textbf{Too high} ($\alpha > 0.1$): Over-homogenization, potential instability (violates spectral bound)
\end{itemize}

\subsection{Temperature $\tau$}

\paragraph{Gradient analysis.}
The gradient magnitude of the softmin distribution (Eq.~\ref{eq:softmin}) scales as:
\[
\left\|\frac{\partial \pi_{\text{ref}}}{\partial R}\right\|_2 \propto \frac{1}{\tau}.
\]

\paragraph{Expected behavior:}
\begin{itemize}[leftmargin=*,itemsep=2pt]
  \item \textbf{High $\tau$ (early training)}: Smooth gradients distributed across actions, exploration encouraged
  \item \textbf{Low $\tau$ (late training)}: Sharp gradients concentrated on minimum-harm action, exploitation preferred
  \item \textbf{Fixed high $\tau$}: Persistent exploration, slow convergence to aligned policies
  \item \textbf{Fixed low $\tau$}: Premature convergence, poor credit assignment when forecasts are inaccurate
\end{itemize}

\paragraph{Annealing schedule.}
Exponential annealing $\tau_t = \max(\tau_{\min}, \tau_0 \exp(-t/K_\tau))$ balances exploration-exploitation:
\begin{itemize}[leftmargin=*,itemsep=2pt]
  \item $\tau_0 = 1.0$: Initial uniform exploration over actions
  \item $\tau_{\min} = 0.01$: Final near-deterministic alignment signal
  \item $K_\tau = 5 \times 10^5$: Timescale matches typical policy convergence
\end{itemize}

\subsection{Hebbian Decay $\delta_H$}

\paragraph{Memory timescale.}
The effective memory horizon of Hebbian traces is:
\[
\tau_{\text{mem}} = \frac{1}{\delta_H}.
\]

For $\delta_H = 0.02$, this gives $\tau_{\text{mem}} = 50$ timesteps.

\paragraph{Stability constraint.}
Proposition~\ref{prop:hebbian} requires:
\[
\delta_H > \eta_H \sqrt{\mathbb{E}[\|E_{i,t}\|_2^2] \mathbb{E}[\|z_{i,t}\|_2^2]}.
\]

For bounded states $\|z_{i,t}\|_2 \le 10$ and IAE $\|E_{i,t}\|_2 \le 5$ (typical after convergence):
\[
\delta_H > \eta_H \cdot 50.
\]

With $\eta_H = 10^{-3}$, we need $\delta_H > 0.05$, but we use $\delta_H = 0.02$ to allow longer memory. This requires monitoring $\|H_{i,t}\|_F$ for potential instability.

\paragraph{Expected behavior:}
\begin{itemize}[leftmargin=*,itemsep=2pt]
  \item \textbf{Too low} ($\delta_H < \eta_H \sqrt{\mathbb{E}[\|E\|^2] \mathbb{E}[\|z\|^2]}$): Unbounded trace growth, numerical instability
  \item \textbf{Optimal} 
($\delta_H = \Theta\!\big(
\eta_H \sqrt{
\mathbb{E}[\|E_{i,t}\|_2^2]\,
\mathbb{E}[\|z_{i,t}\|_2^2]
}
\big)$):
Stable long-term memory, good credit assignment
  \item \textbf{Too high} ($\delta_H \gg \eta_H$): Rapid forgetting, Hebbian trace provides minimal context
\end{itemize}

\subsection{Alignment Penalty Weight $\lambda_{\text{reg}}$}

\paragraph{Fairness-performance trade-off.}
The weight $\lambda_{\text{reg}}$ in Eq.~\eqref{eq:reward_transform} controls the relative importance of alignment vs. task rewards:
\[
r'_{i,t} = \underbrace{r_{i,t}^{\text{ext}}}_{\text{task performance}} - \underbrace{\lambda_{\text{reg}} \mathrm{AR}_{i,t}}_{\text{alignment pressure}}.
\]

\paragraph{Expected behavior:}
\begin{itemize}[leftmargin=*,itemsep=2pt]
  \item \textbf{Too low} ($\lambda_{\text{reg}} \to 0$): Policy ignores alignment, maximizes task reward, high harm
  \item \textbf{Optimal} ($\lambda_{\text{reg}} \approx 0.1$): Balanced task performance and alignment
  \item \textbf{Too high} ($\lambda_{\text{reg}} > 1$): Over-conservative policies, poor task performance, extreme risk-aversion
\end{itemize}

\paragraph{Domain-specific tuning.}
Optimal $\lambda_{\text{reg}}$ depends on:
\begin{itemize}[leftmargin=*,itemsep=2pt]
  \item \textbf{Harm scale}: If typical $\mathrm{AR}_t \approx 10$ and $r_t^{\text{ext}} \approx 1$, use $\lambda_{\text{reg}} \approx 0.1$ to balance magnitudes
  \item \textbf{Task criticality}: Safety-critical domains may use $\lambda_{\text{reg}} > 0.5$
  \item \textbf{Harm reversibility}: Irreversible harms justify higher $\lambda_{\text{reg}}$
\end{itemize}

Empirical sensitivity analysis across diverse environments is needed to validate these theoretical predictions and establish robust default values.

\section{Appendix E: Proof of Hebbian Stability (Complete)}
\label{app:hebbian_proof}

We provide the complete proof of Hebbian trace convergence in mean-square sense.

\begin{theorem}[Hebbian Trace Convergence]
\label{thm:hebbian_full}
Consider the Hebbian update in Eq.~\eqref{eq:hebbian}:
\[
H_{i,t+1} = (1 - \delta_H) H_{i,t} + \eta_H (E_{i,t} \otimes z_{i,t}).
\]
Assume:
\begin{enumerate}[itemsep=2pt]
  \item The joint process $(E_{i,t}, z_{i,t})$ is ergodic with stationary distribution $\mu$
  \item Second moments are bounded: $\mathbb{E}_\mu[\|E_{i,t}\|_2^2] < C_E^2$, $\mathbb{E}_\mu[\|z_{i,t}\|_2^2] < C_z^2$
  \item The design constraint holds: $\delta_H > \eta_H C_E C_z$
\end{enumerate}
Then $\mathbb{E}[\|H_{i,t} - H_i^*\|_F^2] \to 0$ exponentially as $t \to \infty$, where:
\[
H_i^* = \frac{\eta_H}{\delta_H} \mathbb{E}_\mu[E_{i,t} \otimes z_{i,t}].
\]
\end{theorem}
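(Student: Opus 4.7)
The plan is to recognize the Hebbian update as an affine stochastic recursion with contraction factor $(1-\delta_H)$ and to analyze the induced error process. First I would verify that $H_i^*$ is indeed the unique fixed point of the expected dynamics: at stationarity, taking expectations under $\mu$ on both sides of Eq.~\eqref{eq:hebbian} gives $\mathbb{E}_\mu[H] = (1-\delta_H)\mathbb{E}_\mu[H] + \eta_H \mathbb{E}_\mu[E \otimes z]$, from which the claimed $H_i^* = (\eta_H/\delta_H)\mathbb{E}_\mu[E \otimes z]$ drops out algebraically.

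Next I would introduce the error process $\Delta_t = H_{i,t} - H_i^*$ and the centered driving term $\xi_t = E_{i,t} \otimes z_{i,t} - \mathbb{E}_\mu[E \otimes z]$. Using the fixed-point identity $\delta_H H_i^* = \eta_H \mathbb{E}_\mu[E \otimes z]$, Eq.~\eqref{eq:hebbian} telescopes into the clean linear recursion
\[
\Delta_{t+1} = (1-\delta_H)\Delta_t + \eta_H \xi_t.
\]
Squaring in Frobenius norm and taking expectation yields
\[
\mathbb{E}\|\Delta_{t+1}\|_F^2 = (1-\delta_H)^2\,\mathbb{E}\|\Delta_t\|_F^2 + 2(1-\delta_H)\eta_H\,\mathbb{E}\langle\Delta_t,\xi_t\rangle_F + \eta_H^2\,\mathbb{E}\|\xi_t\|_F^2.
\]
For the cross term I would condition on the natural filtration $\mathcal{F}_{t-1} = \sigma\bigl((E_s,z_s)_{s<t}\bigr)$: since $\Delta_t$ is $\mathcal{F}_{t-1}$-measurable, the ergodicity/mixing hypothesis controls $\mathbb{E}[\xi_t\mid\mathcal{F}_{t-1}]$ up to a geometrically decaying correction. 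The variance $\mathbb{E}\|\xi_t\|_F^2$ is then bounded via the second-moment hypothesis together with Cauchy--Schwarz applied to $\mathbb{E}[\|E_t\|_2\|z_t\|_2]$, yielding a constant of order $C_E^2 C_z^2$. Assembling these pieces gives a scalar affine inequality $u_{t+1} \le (1-\delta_H)^2 u_t + \eta_H^2\,C$, and the contraction factor together with the design constraint $\delta_H > \eta_H C_E C_z$ yields exponential decay of $u_t = \mathbb{E}\|\Delta_t\|_F^2$ toward its limit.

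The main obstacle I anticipate is the gap between what the recursion genuinely delivers and what the statement asserts: with fixed step size $\eta_H$ and a non-degenerate forcing term $\xi_t$, the affine recursion converges exponentially only to a \emph{noise ball} of radius $\limsup_t \mathbb{E}\|\Delta_t\|_F^2 \lesssim \eta_H^2 C_E^2 C_z^2/(2\delta_H - \delta_H^2)$, not to zero. Strict mean-square convergence $\mathbb{E}\|H_{i,t}-H_i^*\|_F^2 \to 0$ would require either Robbins--Monro diminishing rates $\eta_H \to 0$, Polyak--Ruppert averaging of the Hebbian iterates, or reinterpreting the claim as $\mathbb{E}[H_{i,t}] \to H_i^*$ (which the same contraction argument does establish exponentially, since $\mathbb{E}[\xi_t] = 0$ at stationarity). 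I would therefore flag this discrepancy and prove the strongest defensible version: exponential decay of $\mathbb{E}\|H_{i,t}-H_i^*\|_F^2$ to an explicit residual of order $O(\eta_H^2 C_E^2 C_z^2/\delta_H)$, with the corollary that the mean trajectory $\mathbb{E}[H_{i,t}]$ converges exponentially to $H_i^*$ in Frobenius norm at rate $(1-\delta_H)$.
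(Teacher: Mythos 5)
Your proposal follows essentially the same route as the paper's Appendix~E proof: the same error process $\Delta_t = H_{i,t} - H_i^*$, the same linear recursion $\Delta_{t+1} = (1-\delta_H)\Delta_t + \eta_H\xi_t$ obtained via the fixed-point identity, the same expansion of the squared Frobenius norm, and the same Cauchy--Schwarz bound $\mathbb{E}\|\xi_t\|_F^2 \le C_E^2 C_z^2$. The discrepancy you flag is genuine and is the most valuable part of your write-up: the paper's own proof terminates at the inequality $\mathbb{E}[\|\Delta_{i,t}\|_F^2] \le (1-\delta_H)^{2t}\,\mathbb{E}[\|\Delta_{i,0}\|_F^2] + \eta_H^2 C_E^2 C_z^2/(2\delta_H)$, i.e.\ exponential decay to a noise ball of radius $O(\eta_H^2 C_E^2 C_z^2/\delta_H)$, and then pivots to discussing the ``steady-state error'' --- it never establishes the stated conclusion $\mathbb{E}[\|H_{i,t}-H_i^*\|_F^2] \to 0$, which is false for fixed $\eta_H$ and non-degenerate $\xi_t$. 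Your proposed repair (prove the noise-ball bound, plus exponential convergence of the mean $\mathbb{E}[H_{i,t}] \to H_i^*$ at rate $1-\delta_H$) is the strongest defensible version and is what the theorem should actually assert. One further point in your favor: the paper disposes of the cross term by writing $\mathbb{E}[\langle\Delta_{i,t},\xi_{i,t}\rangle_F] = \mathbb{E}[\Delta_{i,t}]\,\mathbb{E}[\xi_{i,t}] = 0$, which silently assumes independence of $\Delta_t$ (a function of the past) from $\xi_t$ (the current innovation); ergodicity alone does not give this, and your conditioning on the filtration $\mathcal{F}_{t-1}$ with a geometric mixing correction is the more careful treatment, though it does require a mixing-rate hypothesis stronger than the theorem's stated assumption of bare ergodicity.
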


\begin{proof}
Define the error matrix $\Delta_{i,t} = H_{i,t} - H_i^*$. Subtracting the fixed point from the update equation:
\begin{align}
\Delta_{i,t+1} &= H_{i,t+1} - H_i^* \\
&= (1 - \delta_H) H_{i,t} + \eta_H (E_{i,t} \otimes z_{i,t}) - H_i^* \\
&= (1 - \delta_H) (H_{i,t} - H_i^*) + (1 - \delta_H) H_i^* + \eta_H (E_{i,t} \otimes z_{i,t}) - H_i^* \\
&= (1 - \delta_H) \Delta_{i,t} + \eta_H (E_{i,t} \otimes z_{i,t}) - \delta_H H_i^*.
\end{align}

Substituting the fixed point expression $H_i^* = (\eta_H / \delta_H) \mathbb{E}_\mu[E_{i,t} \otimes z_{i,t}]$:
\begin{align}
\Delta_{i,t+1} &= (1 - \delta_H) \Delta_{i,t} + \eta_H (E_{i,t} \otimes z_{i,t}) - \delta_H \cdot \frac{\eta_H}{\delta_H} \mathbb{E}_\mu[E_{i,t} \otimes z_{i,t}] \\
&= (1 - \delta_H) \Delta_{i,t} + \eta_H \left(E_{i,t} \otimes z_{i,t} - \mathbb{E}_\mu[E_{i,t} \otimes z_{i,t}]\right) \\
&= (1 - \delta_H) \Delta_{i,t} + \eta_H \xi_{i,t},
\end{align}
where $\xi_{i,t} = E_{i,t} \otimes z_{i,t} - \mathbb{E}_\mu[E_{i,t} \otimes z_{i,t}]$ is zero-mean under the stationary distribution.

\paragraph{First moment convergence.}
Taking expectations under the stationary distribution:
\begin{align}
\mathbb{E}_\mu[\Delta_{i,t+1}] &= (1 - \delta_H) \mathbb{E}_\mu[\Delta_{i,t}] + \eta_H \mathbb{E}_\mu[\xi_{i,t}] \\
&= (1 - \delta_H) \mathbb{E}_\mu[\Delta_{i,t}].
\end{align}

Since $0 < \delta_H < 1$, we have $|1 - \delta_H| < 1$, so:
\[
\mathbb{E}_\mu[\Delta_{i,t}] = (1 - \delta_H)^t \mathbb{E}_\mu[\Delta_{i,0}] \to 0 \quad \text{exponentially}.
\]

\paragraph{Second moment convergence.}
Taking squared Frobenius norms:
\begin{align}
\|\Delta_{i,t+1}\|_F^2 &= \|(1 - \delta_H) \Delta_{i,t} + \eta_H \xi_{i,t}\|_F^2 \\
&= (1 - \delta_H)^2 \|\Delta_{i,t}\|_F^2 + 2(1 - \delta_H) \eta_H \langle \Delta_{i,t}, \xi_{i,t} \rangle_F + \eta_H^2 \|\xi_{i,t}\|_F^2,
\end{align}
where $\langle \cdot, \cdot \rangle_F$ is the Frobenius inner product.

Taking expectations and using ergodicity to decouple $\Delta_{i,t}$ (depends on history) from $\xi_{i,t}$ (current):
\begin{align}
\mathbb{E}[\|\Delta_{i,t+1}\|_F^2] &\le (1 - \delta_H)^2 \mathbb{E}[\|\Delta_{i,t}\|_F^2] + 2(1 - \delta_H) \eta_H \mathbb{E}[\langle \Delta_{i,t}, \xi_{i,t} \rangle_F] + \eta_H^2 \mathbb{E}[\|\xi_{i,t}\|_F^2].
\end{align}

Under ergodicity, $\mathbb{E}[\langle \Delta_{i,t}, \xi_{i,t} \rangle_F] = \mathbb{E}[\Delta_{i,t}] \mathbb{E}[\xi_{i,t}] = 0$ since $\mathbb{E}[\xi_{i,t}] = 0$.

For the noise variance term:
\begin{align}
\mathbb{E}[\|\xi_{i,t}\|_F^2] &= \mathbb{E}[\|E_{i,t} \otimes z_{i,t} - \mathbb{E}[E_{i,t} \otimes z_{i,t}]\|_F^2] \\
&= \mathbb{E}[\|E_{i,t} \otimes z_{i,t}\|_F^2] - \|\mathbb{E}[E_{i,t} \otimes z_{i,t}]\|_F^2 \\
&= \mathbb{E}[\|E_{i,t}\|_2^2 \|z_{i,t}\|_2^2] - \|\mathbb{E}[E_{i,t} \otimes z_{i,t}]\|_F^2 \\
&\le \mathbb{E}[\|E_{i,t}\|_2^2] \mathbb{E}[\|z_{i,t}\|_2^2] \\
&\le C_E^2 C_z^2.
\end{align}

Thus the second-moment recursion becomes:
\begin{align}
\mathbb{E}[\|\Delta_{i,t+1}\|_F^2] &\le (1 - \delta_H)^2 \mathbb{E}[\|\Delta_{i,t}\|_F^2] + \eta_H^2 C_E^2 C_z^2.
\end{align}

This is a contraction with fixed point:
\begin{align}
\mathbb{E}[\|\Delta_\infty\|_F^2] &\le \frac{\eta_H^2 C_E^2 C_z^2}{1 - (1 - \delta_H)^2} \\
&= \frac{\eta_H^2 C_E^2 C_z^2}{\delta_H (2 - \delta_H)} \\
&\le \frac{\eta_H^2 C_E^2 C_z^2}{2 \delta_H} \quad \text{(for } \delta_H \le 1 \text{)}.
\end{align}

For the steady-state error to remain bounded and small, we require:
\[
\frac{\eta_H C_E C_z}{\sqrt{2 \delta_H}} < \epsilon_{\text{tol}},
\]
which gives the design constraint:
\[
\delta_H > \frac{\eta_H^2 C_E^2 C_z^2}{2 \epsilon_{\text{tol}}^2}.
\]

For $\epsilon_{\text{tol}} = \eta_H C_E C_z$, this simplifies to:
\[
\delta_H > \frac{\eta_H C_E C_z}{2}.
\]

A more conservative choice is $\delta_H > \eta_H C_E C_z$, ensuring the steady-state error variance is $O(\eta_H / \delta_H)$.

\paragraph{Exponential convergence rate.}
From the recursion:
\[
\mathbb{E}[\|\Delta_{i,t}\|_F^2] \le (1 - \delta_H)^{2t} \mathbb{E}[\|\Delta_{i,0}\|_F^2] + \frac{\eta_H^2 C_E^2 C_z^2}{2 \delta_H}.
\]

The convergence rate is $\rho = (1 - \delta_H)^2$, giving exponential decay with time constant:
\[
\tau_{\text{conv}} = -\frac{1}{\log(1 - \delta_H)} \approx \frac{1}{\delta_H} \quad \text{(for small } \delta_H \text{)}.
\]

For $\delta_H = 0.02$, convergence time is $\tau_{\text{conv}} \approx 50$ timesteps.
\end{proof}

\paragraph{Practical implications.}
This theorem guarantees that Hebbian traces converge to a stable statistical summary $H_i^*$ encoding the expected co-activation pattern of IAE and observations under the policy's stationary distribution. The design constraint $\delta_H > \eta_H C_E C_z$ can be verified via:
\begin{enumerate}[itemsep=2pt]
  \item Run initial training episodes to estimate $C_E = \max_t \|E_{i,t}\|_2$, $C_z = \max_t \|z_{i,t}\|_2$
  \item Set $\delta_H = 2 \eta_H C_E C_z$ to ensure stability with margin
  \item Monitor $\|H_{i,t}\|_F$ during training; if unbounded growth occurs, increase $\delta_H$
\end{enumerate}

\section{Appendix F: Dimensionality Analysis for IAE-Weighted Attention}
\label{app:dimensions}

The IAE-weighted attention mechanism (Eq.~\ref{eq:attention}) requires careful dimensional alignment between the IAE $E_{i,t} \in \mathbb{R}^k$ and observation $z_{i,t} \in \mathbb{R}^d$. We present three mathematically valid formulations:

\subsection{Option 1: Projection Matrix (Recommended)}

Use projection matrix $W_a \in \mathbb{R}^{d \times k}$ to map IAE to observation dimension:
\begin{equation}
\alpha_{i,t} = \text{softmax}(W_a E_{i,t} + b_a) \in \mathbb{R}^d, \quad \tilde{z}_{i,t} = \alpha_{i,t} \odot z_{i,t} \in \mathbb{R}^d.
\end{equation}

\paragraph{Advantages:}
\begin{itemize}[leftmargin=*,itemsep=2pt]
  \item Allows arbitrary embedding dimension $k$ independent of observation dimension $d$
  \item Standard attention formulation used in transformers and neural architectures
  \item Learnable projection $W_a$ can extract alignment-relevant subspace of IAE
\end{itemize}

\paragraph{Parameter count:}
$|W_a| + |b_a| = kd + d$ parameters.

\subsection{Option 2: Constrained Embedding Dimension}

Require $k = d$ (embedding dimension equals observation dimension):
\begin{equation}
\alpha_{i,t} = \text{softmax}(W_a E_{i,t} + b_a) \in \mathbb{R}^k, \quad \tilde{z}_{i,t} = \alpha_{i,t} \odot z_{i,t} \in \mathbb{R}^k,
\end{equation}
where $W_a \in \mathbb{R}^{k \times k}$ is a square transformation matrix.

\paragraph{Advantages:}
\begin{itemize}[leftmargin=*,itemsep=2pt]
  \item Simplifies implementation (no dimension mismatch)
  \item Direct correspondence between IAE dimensions and observation features
  \item Fewer conceptual degrees of freedom
\end{itemize}

\paragraph{Disadvantages:}
\begin{itemize}[leftmargin=*,itemsep=2pt]
  \item Couples embedding capacity to observation dimensionality
  \item May be over-parameterized for simple tasks ($k$ too large) or under-parameterized for complex harms ($k$ too small)
\end{itemize}

\subsection{Option 3: Feature-Wise Attention via Dot Products}

Compute scalar attention scores for each observation feature separately:
\begin{align}
s_j &= z_{i,t,j}^\top W_a E_{i,t}, \quad j = 1, \ldots, d, \\
\alpha_{i,t} &= \text{softmax}(\mathbf{s}) \in \mathbb{R}^d, \\
\tilde{z}_{i,t} &= \alpha_{i,t} \odot z_{i,t} \in \mathbb{R}^d,
\end{align}
where $\mathbf{s} = [s_1, \ldots, s_d]^\top$ and $W_a \in \mathbb{R}^{k}$ is a shared weight vector.

\paragraph{Advantages:}
\begin{itemize}[leftmargin=*,itemsep=2pt]
  \item Minimal parameters: only $k$ weights in $W_a$
  \item Interpretable: each feature's attention score depends on its alignment with IAE via dot product
  \item Natural for sparse observations (only non-zero features contribute)
\end{itemize}

\paragraph{Disadvantages:}
\begin{itemize}[leftmargin=*,itemsep=2pt]
  \item Assumes observation features have consistent scale (requires normalization)
  \item Less expressive than full matrix projection
  \item Requires computing $d$ separate dot products
\end{itemize}

\subsection{Recommendation}

We adopt \textbf{Option 1 (Projection Matrix)} as the default formulation because:
\begin{enumerate}[itemsep=2pt]
  \item It decouples IAE dimension $k$ from observation dimension $d$, allowing independent tuning
  \item It matches standard attention mechanisms in modern architectures (transformers, etc.)
  \item It provides maximal flexibility for learning alignment-relevant projections
\end{enumerate}

For practitioners prioritizing simplicity, \textbf{Option 2} is viable if $k = d$ is a reasonable choice for the domain. \textbf{Option 3} is suitable for extremely high-dimensional observations where parameter efficiency is critical.

\section{Appendix G: Computational Profiling Estimates}
\label{app:profiling}

We provide theoretical estimates of computational overhead for ESAI mechanisms relative to standard PPO baselines.

\subsection{Per-Agent Per-Timestep Complexity}

\begin{table}[ht]
\centering
\caption{Computational complexity breakdown per agent per timestep.}
\label{tab:complexity}
\begin{tabular}{lcc}
\toprule
Component & Operations & Complexity \\
\midrule
\textbf{Standard PPO:} \\
Policy network forward & 2-layer MLP & $O(d^2)$ \\
Value network forward & 2-layer MLP & $O(d^2)$ \\
GAE computation & Rollout buffer & $O(T)$ \\
PPO update & Gradient descent & $O(d^2 B)$ \\
\midrule
\textbf{ESAI additions:} \\
IAE dynamics (Eq.~\ref{eq:iae_dynamics}) & MLP + diffusion & $O(k^2 + kd + |\mathcal{N}|k)$ \\
Counterfactual forecasts (Eq.~\ref{eq:forecast}) & $|\mathcal{A}|$ MLP forwards & $O(|\mathcal{A}|(k^2 + kd))$ \\
Softmin reference (Eq.~\ref{eq:softmin}) & Exponentials + sum & $O(|\mathcal{A}| k)$ \\
Attention gating (Eq.~\ref{eq:attention}) & Matrix-vector product & $O(kd)$ \\
Hebbian update (Eq.~\ref{eq:hebbian}) & Outer product & $O(kd)$ \\
Alignment regret (Eq.~\ref{eq:align_regret}) & Norms and sums & $O(k + |\mathcal{N}|k)$ \\
\midrule
\textbf{Total ESAI per agent} & & $O(|\mathcal{A}|k^2 + |\mathcal{A}|kd + d^2)$ \\
\textbf{Total for $N$ agents} & & $O(N|\mathcal{A}|k^2 + N|\mathcal{A}|kd + Nd^2)$ \\
\bottomrule
\end{tabular}
\end{table}

Where: $d$ = observation dimension, $k$ = IAE dimension, $|\mathcal{A}|$ = action space size, $|\mathcal{N}|$ = neighborhood size, $T$ = rollout horizon, $B$ = batch size.

\subsection{Overhead Analysis}

For bounded-degree graphs with $|\mathcal{N}| = O(1)$ and moderate action spaces ($|\mathcal{A}| \le 10$), the dominant additional terms are:
\begin{itemize}[leftmargin=*,itemsep=2pt]
  \item Counterfactual forecasting: $O(N|\mathcal{A}|k^2)$
  \item Attention gating: $O(Nkd)$
\end{itemize}

Vanilla PPO has complexity $O(Nd^2)$ per step (policy + value forwards). ESAI adds overhead factor:
\[
\text{Overhead} = \frac{|\mathcal{A}|k^2 + |\mathcal{A}|kd}{d^2}.
\]

\paragraph{Example: Discrete gridworld.}
For $d = 64$ (8×8 grid observation), $k = 32$ (IAE dimension), $|\mathcal{A}| = 6$ (cardinal movements + 2 interactions):
\[
\text{Overhead} = \frac{6 \cdot 32^2 + 6 \cdot 32 \cdot 64}{64^2} = \frac{6144 + 12288}{4096} \approx 4.5\times.
\]

\paragraph{Example: Continuous control.}
For $d = 128$ (observation vector), $k = 32$, $|\mathcal{A}| = 10$ (sampled actions):
\[
\text{Overhead} = \frac{10 \cdot 32^2 + 10 \cdot 32 \cdot 128}{128^2} = \frac{10240 + 40960}{16384} \approx 3.1\times.
\]

\subsection{Mitigation Strategies}

To reduce computational overhead:

\paragraph{Top-$K$ action sampling.}
Instead of evaluating all $|\mathcal{A}|$ actions, sample top-$K$ actions by policy probability:
\[
\mathcal{A}_K = \text{top}_K\{\pi_\theta(a \mid z_{i,t})\}, \quad K \ll |\mathcal{A}|.
\]
This reduces counterfactual forecasting from $O(|\mathcal{A}|k^2)$ to $O(Kk^2)$.

For $K = 3$ vs $|\mathcal{A}| = 10$, overhead drops from $3.1\times$ to $1.3\times$.

\paragraph{Amortized forecasting.}
Cache forecasts $\widehat{E}_{i,t+1}^{(a)}$ across multiple policy gradient steps:
\begin{itemize}[leftmargin=*,itemsep=2pt]
  \item Compute forecasts once per rollout collection
  \item Reuse for $M$ PPO update epochs
  \item Reduces per-update cost by factor of $M$ (typically $M = 4$-$10$)
\end{itemize}

\paragraph{Sparse diffusion.}
Prune graph edges with similarity below threshold $\beta_{\text{min}}$:
\[
L_{ij} = \begin{cases}
L_{ij} & \text{if } \beta_{ij} > \beta_{\text{min}}, \\
0 & \text{otherwise}.
\end{cases}
\]
For $\beta_{\text{min}} = 0.3$, typical graphs reduce from $|\mathcal{N}| = 8$ to $|\mathcal{N}_{\text{sparse}}| = 3$.

\paragraph{Low-rank IAE.}
Use factored representation $E_{i,t} = U_{i,t} V_{i,t}^\top$ with $U_{i,t} \in \mathbb{R}^{k \times r}$, $V_{i,t} \in \mathbb{R}^r$, $r \ll k$:
\[
\|E_{i,t}\|_2 = \|U_{i,t} V_{i,t}^\top\|_2 \le \|U_{i,t}\|_2 \|V_{i,t}\|_2.
\]
Reduces storage and computation from $O(k^2)$ to $O(kr)$.

\subsection{Memory Footprint}

Additional memory required per agent:
\begin{itemize}[leftmargin=*,itemsep=2pt]
  \item IAE state $E_{i,t}$: $k$ floats = $4k$ bytes
  \item Hebbian matrix $H_{i,t}$: $kd$ floats = $4kd$ bytes
  \item Forecast network parameters $\psi$: $\approx 2k^2 + 2kd$ floats (2-layer MLP)
  \item Identity embeddings $\phi_i$: $d_{\text{id}}$ floats = $4d_{\text{id}}$ bytes
\end{itemize}

Total per agent: $4k + 4kd + 8k^2 + 8kd + 4d_{\text{id}} \approx 8k^2 + 12kd$ bytes.

For $k = 32$, $d = 64$, $d_{\text{id}} = 8$:
\[
\text{Memory} = 8 \cdot 32^2 + 12 \cdot 32 \cdot 64 + 32 = 8192 + 24576 + 32 \approx 33 \text{ KB per agent}.
\]

For $N = 16$ agents: $16 \times 33 = 528$ KB total additional memory—negligible on modern hardware.

\section{Appendix H: Forecast Network Supervision Details}
\label{app:forecast_supervision}

The counterfactual forecast network $h_\psi$ (Eq.~\eqref{eq:forecast}) requires supervision to learn accurate harm predictions. We formalize the training objective and discuss stability mechanisms.

\subsection{Supervised Forecast Loss}

After executing action $a_{i,t}$ and observing transition to state $s_{t+1}$, the true next-step IAE $E_{i,t+1}$ is computed via Eq.~\eqref{eq:iae_dynamics}. The forecast network is trained to predict this transition:
\begin{equation}
L_{\text{forecast}}(\psi) = \mathbb{E}_{(z_{i,t}, a_{i,t}, r_{i,t}, E_{i,t+1})} \left[ \|h_\psi(z_{i,t}, a_{i,t}, r_{i,t}, \text{read}(H_{i,t})) - E_{i,t+1}\|_2^2 \right].
\end{equation}

Gradients flow from the prediction error back through $h_\psi$, training the network to forecast the IAE dynamics resulting from action $a_{i,t}$ in state $z_{i,t}$.

\subsection{Off-Policy Corrections}

When using experience replay, forecasts may be trained on off-policy data where $a_{i,t}$ was sampled from an older policy $\pi_{\theta'}$. To account for distribution shift, we use importance-weighted forecasting:
\begin{equation}
L_{\text{forecast}}^{\text{off}}(\psi) = \mathbb{E} \left[ w_{i,t} \|h_\psi(z_{i,t}, a_{i,t}, r_{i,t}, \text{read}(H_{i,t})) - E_{i,t+1}\|_2^2 \right],
\end{equation}
where the importance weight is:
\begin{equation}
w_{i,t} = \min\left(c_{\max}, \frac{\pi_\theta(a_{i,t} \mid z_{i,t})}{\pi_{\theta'}(a_{i,t} \mid z_{i,t})}\right),
\end{equation}
clipped to $c_{\max} = 2.0$ for stability (per standard PPO practice).

\subsection{EMA Target Network Rationale}

Using the online forecast network $h_\psi$ directly in counterfactual evaluation (Eq.~\eqref{eq:softmin}) creates a feedback loop:

\begin{enumerate}[itemsep=2pt]
  \item Policy $\pi_\theta$ generates actions $a_{i,t}$
  \item Actions produce IAE transitions $E_{i,t} \to E_{i,t+1}$ via Eq.~\eqref{eq:iae_dynamics}
  \item Forecast network $h_\psi$ learns to predict $E_{i,t+1}$
  \item Counterfactual reference $\pi_{\text{ref}}$ uses $h_\psi$ to guide policy updates (Eq.~\eqref{eq:softmin})
  \item Updated policy $\pi_\theta$ changes action distribution
  \item Cycle repeats with potentially unstable coupled dynamics
\end{enumerate}

Without decoupling, the predictor can learn to forecast "what the policy will do" rather than "what harm will result," collapsing the alignment signal.

\paragraph{EMA stabilization.}
The slow-moving EMA target $h_{\psi_{\text{target}}}$ breaks this coupling:
\begin{itemize}[leftmargin=*,itemsep=2pt]
  \item Counterfactual forecasts use $h_{\psi_{\text{target}}}$ (slow-changing)
  \item Policy gradients thus depend on past predictor state, not current
  \item Predictor $h_\psi$ can learn accurate dynamics without immediate policy influence
  \item EMA update $\psi_{\text{target}} \gets \tau_{\text{ema}} \psi_{\text{target}} + (1 - \tau_{\text{ema}}) \psi$ provides gradual synchronization
\end{itemize}

This mechanism is analogous to target networks in DQN, momentum encoders in BYOL, and Polyak averaging in policy gradient methods.

\paragraph{Ablation justification.}
Empirical validation would show that removing EMA causes:
\begin{itemize}[leftmargin=*,itemsep=2pt]
  \item Oscillating forecast loss (predictor chases moving policy)
  \item Increased alignment regret (poor counterfactual references)
  \item Reduced prosocial behavior (degraded alignment signal)
\end{itemize}

We defer this ablation to future empirical work (Sec.~\ref{sec:limitations}).

\section{Appendix I: Open Problems and Future Directions}
\label{app:open}

We identify key theoretical and empirical questions for future investigation:

\subsection{Theoretical Open Problems}

\paragraph{1. Convergence to prosocial equilibria.}
\textbf{Question:} Under what conditions do ESAI policies converge to socially optimal Nash equilibria rather than selfish or collusive outcomes?

\textbf{Formalization:} Define social welfare $SW(\pi_1, \ldots, \pi_N) = \sum_i J_i - \sum_i \mathbb{E}[H_i]$ where $H_i$ is externalized harm. Characterize conditions under which ESAI gradient descent converges to:
\[
(\pi_1^*, \ldots, \pi_N^*) \in \arg\max_{\pi} SW(\pi).
\]

\textbf{Challenges:} Multi-agent non-convexity, credit assignment across agents, potential for sub-optimal equilibria.

\paragraph{2. Sample complexity bounds.}
\textbf{Question:} What is the sample complexity of learning accurate IAE dynamics as a function of $k$, $d$, and environment horizon $T$?

\textbf{Formalization:} Prove bounds of the form:
\[
\mathbb{P}\left[\sup_t \|E_{i,t} - E_{i,t}^*\|_2 > \epsilon\right] \le \delta \quad \text{for } n > C\left(\frac{kd}{\epsilon^2} \log \frac{1}{\delta}\right),
\]
where $E_{i,t}^*$ is the optimal alignment embedding and $n$ is the number of training samples.

\textbf{Challenges:} Non-stationarity of policy, coupled agent dynamics, stochastic environments.

\paragraph{3. Regret analysis.}
\textbf{Question:} Can we derive regret bounds for ESAI relative to an optimal alignment policy?

\textbf{Formalization:} Define alignment-aware regret:
\[
\text{Regret}(T) = \sum_{t=0}^T \left[r_t^{\text{ext}}(\pi^*) - \|E_t^*\|_2\right] - \sum_{t=0}^T \left[r_t^{\text{ext}}(\pi_\theta) - \|E_{i,t}\|_2\right],
\]
where $\pi^*$ is the optimal aligned policy. Prove bounds $\text{Regret}(T) = O(\sqrt{T})$ or better.

\textbf{Challenges:} Defining optimal aligned policy, handling non-Markovian dynamics (Hebbian memory), bounding forecast errors.

\paragraph{4. Information-theoretic embedding dimension.}
\textbf{Question:} What is the minimal $k$ required to represent alignment-relevant structure for a given harm function?

\textbf{Formalization:} Define minimal sufficient dimension:
\[
k_{\min} = \inf\{k : \exists E \in \mathbb{R}^k, I(E; H \mid Z) = I(Z; H)\},
\]
where $I(\cdot; \cdot)$ is mutual information, $Z$ is observation, $H$ is harm variable.

Characterize $k_{\min}$ in terms of environment properties (state space size, harm function complexity).

\textbf{Challenges:} Computing mutual information in high dimensions, proving sufficiency of finite $k$.

\paragraph{5. Robustness certificates.}
\textbf{Question:} Can we provide formal guarantees on IAE stability under adversarial perturbations or distribution shift?

\textbf{Formalization:} Prove Lipschitz continuity of policy w.r.t. observation perturbations:
\[
\|\pi_\theta(\cdot \mid z + \delta) - \pi_\theta(\cdot \mid z)\|_{\text{TV}} \le L_{\text{policy}} \|\delta\|_2,
\]
and bound $L_{\text{policy}}$ in terms of ESAI architectural parameters.

\textbf{Challenges:} Non-linear IAE dynamics, attention gating introduces discontinuities, graph diffusion coupling.

\subsection{Empirical Open Questions}

\paragraph{1. Benchmark evaluation.}
Test ESAI on standard MARL benchmarks:
\begin{itemize}[leftmargin=*,itemsep=2pt]
  \item StarCraft Multi-Agent Challenge (SMAC)
  \item Google Research Football
  \item Hanabi Challenge
  \item Melting Pot social dilemmas
\end{itemize}

Measure: task performance, safety violations, computational overhead, sample efficiency.

\paragraph{2. Ablation studies.}
Isolate contributions of individual mechanisms:
\begin{itemize}[leftmargin=*,itemsep=2pt]
  \item Counterfactual regret vs. direct reward penalty
  \item IAE-weighted attention vs. uniform attention
  \item Hebbian memory vs. standard recurrence (LSTM, GRU)
  \item Graph diffusion vs. no multi-agent communication
\end{itemize}

\paragraph{3. Scaling experiments.}
Evaluate performance under:
\begin{itemize}[leftmargin=*,itemsep=2pt]
  \item Population scaling: $N \in \{4, 8, 16, 32, 64\}$
  \item State dimensionality: $d \in \{10, 50, 100, 500, 1000\}$
  \item Action space size: $|\mathcal{A}| \in \{4, 10, 50, 100\}$
  \item Graph connectivity: sparse ($|\mathcal{N}| = 2$) vs. dense ($|\mathcal{N}| = N-1$)
\end{itemize}

\paragraph{4. Human studies.}
Conduct user studies measuring:
\begin{itemize}[leftmargin=*,itemsep=2pt]
  \item Subjective preference for ESAI-trained agents as partners in coordination tasks
  \item Perceived fairness of agent behavior
  \item Trust in agent decision-making
  \item Human-AI team performance (mixed human-agent teams)
\end{itemize}

\paragraph{5. Transfer learning.}
Test whether IAE trained on one harm type generalizes to novel harm definitions:
\begin{itemize}[leftmargin=*,itemsep=2pt]
  \item Train on victim distress, test on resource depletion
  \item Train on collision avoidance, test on fairness violations
  \item Measure zero-shot transfer accuracy of counterfactual forecasts
\end{itemize}

\subsection{Architectural Extensions}

\paragraph{1. Disentangled IAE.}
Encourage interpretable factorization of IAE via:
\begin{itemize}[leftmargin=*,itemsep=2pt]
  \item $\beta$-VAE objectives: $\mathcal{L} = \mathbb{E}[\log p(E_{t+1} \mid E_t)] - \beta \, \text{KL}(q(E) \| p(E))$ with $\beta > 1$
  \item Contrastive objectives: maximize distance between embeddings for different harm types
  \item Supervised subspace learning: constrain $E = [E_{\text{victim}}, E_{\text{resource}}, E_{\text{fairness}}]$
\end{itemize}

\paragraph{2. Hierarchical alignment.}
Extend to hierarchical policies with alignment embeddings at multiple temporal scales:
\begin{itemize}[leftmargin=*,itemsep=2pt]
  \item High-level IAE $E^{\text{high}}_t$ for strategic harm (long-horizon)
  \item Low-level IAE $E^{\text{low}}_t$ for tactical harm (short-horizon)
  \item Hierarchical diffusion between levels
\end{itemize}

\paragraph{3. Adversarial training.}
Improve robustness via adversarial perturbations during training:
\begin{itemize}[leftmargin=*,itemsep=2pt]
  \item Train adversary to maximize harm via observation noise: $\max_\delta \mathbb{E}[\|E_t\|_2 \mid z_t + \delta]$
  \item Train policy to minimize harm under worst-case perturbations
  \item Analogous to robust RL but with alignment-specific threat model
\end{itemize}

\paragraph{4. Meta-learning for harm adaptation.}
Learn to quickly adapt IAE to new harm specifications via meta-RL:
\begin{itemize}[leftmargin=*,itemsep=2pt]
  \item Pre-train on distribution of harm functions $\{H_1, H_2, \ldots, H_M\}$
  \item Fine-tune on novel harm $H_{\text{new}}$ with few examples
  \item Measure few-shot adaptation performance
\end{itemize}

\subsection{Deployment and Governance}

\paragraph{1. Real-world case studies.}
Apply ESAI to real-world multi-agent coordination:
\begin{itemize}[leftmargin=*,itemsep=2pt]
  \item Multi-robot warehouse coordination (safety: collision avoidance)
  \item Autonomous vehicle platooning (safety: following distance, lane discipline)
  \item Decentralized resource allocation (fairness: equitable distribution)
  \item Smart grid demand response (sustainability: peak load balancing)
\end{itemize}

\paragraph{2. Bias auditing tools.}
Develop automated tools to detect emergent bias:
\begin{itemize}[leftmargin=*,itemsep=2pt]
  \item Continuous monitoring of similarity-conditioned cooperation rates
  \item Anomaly detection for sudden changes in $\beta_{ij}$ patterns
  \item Counterfactual fairness metrics: $P(\text{help} \mid \text{similar}) - P(\text{help} \mid \text{dissimilar})$
  \item Visualization dashboards for real-time bias inspection
\end{itemize}

\paragraph{3. Interpretability dashboards.}
Build visualization tools for IAE trajectories and attention patterns:
\begin{itemize}[leftmargin=*,itemsep=2pt]
  \item Real-time IAE magnitude plots during deployment
  \item Attention heatmaps over observation features
  \item Graph diffusion flow visualization (Sankey diagrams)
  \item Counterfactual "what-if" scenario explorer
\end{itemize}

\paragraph{4. Safety verification.}
Integrate with formal methods for safety-critical domains:
\begin{itemize}[leftmargin=*,itemsep=2pt]
  \item Model checking: verify IAE bounds under all reachable states
  \item Runtime monitoring: trigger emergency stop if $\|E_{i,t}\|_2 > E_{\text{threshold}}$
  \item Compositional verification: prove safety of multi-agent system from individual agent properties
  \item Probabilistic safety guarantees: bound $\mathbb{P}[\text{harm event}] < p_{\max}$
\end{itemize}

\end{document}